\newcommand{\NN}{\mathbb{N}}
\newcommand{\Q}{\mathbb{Q}}
\newcommand{\eps}{\Delta}
\newcommand{\bigO}{\mathcal{O}}
\newcommand{\Gg}{\mathcal{G}}
\newcommand{\Hh}{\mathcal{H}}
\newcommand{\true}{\texttt{true}}
\newcommand{\false}{\texttt{false}}
\newcommand{\NP}{\textnormal{\textsf{NP}}\xspace}
\newcommand{\XP}{\textnormal{\textsf{XP}}\xspace}
\newcommand{\threesat}{\textnormal{\textsc{$3$-SAT}}\xspace}
\DeclarePairedDelimiterX{\set}[1]{\{ }{ \} }{\setargs{#1}}
\NewDocumentCommand{\setargs}{>{\SplitArgument{1}{;}}m}
{\setargsaux#1}
\NewDocumentCommand{\setargsaux}{mm}
{\IfNoValueTF{#2}{#1} {#1\::\:#2}}
\DeclarePairedDelimiterX{\abs}[1]{\lvert}{\rvert}{#1}
\DeclareMathOperator{\dtgw}{dtgw}
\newcommand{\dist}{\ensuremath{d}}
\newcommand{\DTGW}{\textsc{DTGW}\xspace}
\newcommand{\AM}{\textsc{AM}\xspace}
\newtheorem{theorem}{Theorem}[section]
\newtheorem{observation}[theorem]{Observation}
\newtheorem{corollary}[theorem]{Corollary}
\newcommand{\problemdef}[3]{
	\begin{center}
  \begin{minipage}{0.95\textwidth}
    \noindent
    \textsc{#1}

			\vspace{2pt}
			\setlength{\tabcolsep}{3pt}
			\begin{tabularx}{\textwidth}{@{}lX@{}}
					\textbf{Input:} 		& #2 \\
					\textbf{Question:} 	& #3
				\end{tabularx}
  \end{minipage}
	\end{center}
      }
\title{Comparing Temporal Graphs Using Dynamic Time Warping}
\author[1]{Vincent Froese}
\author[2]{Brijnesh Jain\footnote{Supported by the DFG project JA~2109/4-2.}}
\author[1]{Rolf Niedermeier}
\author[1]{Malte Renken\footnote{Supported by the DFG project NI~369/17-1.}}
\affil[1]{\small
  Algorithmics and Computational Complexity, Faculty~IV, TU Berlin, Berlin, Germany,\protect\\
  \{vincent.froese, rolf.niedermeier, m.renken\}@tu-berlin.de}
\affil[2]{\small Distributed Artificial Intelligence Laboratory, Faculty~IV, TU Berlin, Berlin, Germany\protect\\
brijnesh.jain@dai-labor.de}
\date{\today}
\begin{document}

\maketitle

\begin{abstract} 
  Within many real-world networks the links between pairs of nodes  change over time.
  Thus, there has been a recent boom in studying temporal graphs.
  Recognizing patterns in temporal graphs requires a proximity measure to compare different temporal graphs.
  To this end, we propose to study dynamic time warping on temporal graphs.
  We define the dynamic temporal graph warping distance (dtgw) to determine the dissimilarity of two temporal graphs.
  Our novel measure is flexible and can be applied in various application domains.
  We show that computing the dtgw-distance is a challenging (in general) \NP{}-hard optimization problem and identify some polynomial-time solvable special cases.
  Moreover, we develop a quadratic programming formulation and an efficient heuristic.
  In experiments on real-word data we show that the heuristic performs very well and that our dtgw-distance performs favorably in de-anonymizing networks compared to other approaches.
  
\medskip

\noindent\textbf{Keywords:} temporal graph matching, vertex signatures, heuristic optimization, quadratic programming, parameterized algorithms
\end{abstract}

\section{Introduction}
A fundamental concept for pattern recognition is the notion of proximity (i.e., (dis)similarity) between objects. 
For objects that are represented by numerical feature vectors, there exist a lot of well-known
proximity measures such as $p$-norms or positive semi-definite kernels.
In structural pattern recognition, objects are often more naturally represented by complex (discrete) data structures such as graphs, strings, or time series.
For these representations, one can often not simply use vector-based proximity measures.
Instead, one needs to define suitable domain-specific proximity measures such as the \emph{edit distance} on graphs or strings or the \emph{dynamic time warping} distance%
\footnote{Note that a \emph{distance} function is not required to obey the triangle inequality, in contrast to a \emph{metric}.}
 on time series.

The majority of graph proximity measures focuses on static graphs. This includes the graph edit distance~\cite{Riesen2015}, graph kernels \cite{KriegeJM20}, and geometric graph distances~\cite{Jain16}.
However, many complex systems are not static as the links between entities dynamically change over time.
Thus, there is a steadily growing research interest in analyzing \emph{temporal graphs} (we also use the term \emph{temporal network} interchangeably)~\cite{RG19,HS13,HS19}.
Such temporal graphs can be represented by a series of temporal edges between a fixed set of vertices.
Examples are social contact networks, disease spreading networks, traffic networks, attack networks in computer security, or protein-protein-interaction networks in biology~\cite{Kostakos09,HS13,HS19,LCLWB17,VCM17}.
Examples of data mining problems on temporal social networks include community detection \cite{DBSB19}, epidemics analysis~\cite{RGPV16}, and influence spreading~\cite{GPT15}.

Many processes described by temporal graphs naturally vary in duration and temporal dynamics (for example, chemical reactions or the spread of a disease might proceed at different speeds), which makes data mining tasks such as classification challenging.
Hence, one needs to find suitable proximity measures, which has seemingly not been done so far.

Our paper proposes such a measure.
We introduce a novel proximity measure on temporal graphs based on vertex signature graph distance and dynamic time warping, called \emph{dynamic temporal graph warping} (dtgw).
Dynamic time warping allows to cope with variations in temporal dynamics.
Thus, by combining established methods from graph-based pattern recognition and time series data mining in a nontrivial way, we obtain a suitable tool to analyze temporal network data.
We study the computational complexity of the dtgw-distance, develop efficient algorithms and study their behavior on real-world data, the latter indicating the strong potential for future applications.

\paragraph*{Related Work.}
Graph distance based on vertex mappings using local vertex signatures was introduced by~\citet{JT09}.
The idea of using vertex mappings can also be found in \emph{optimal assignment kernels}~\cite{FWSZ05,KGW16,BI18}.

Regarding proximity measures on temporal graphs, seemingly little work has been done so far. In fact, we are not aware of other approaches for numerically measuring the proximity of two temporal graphs.
Related concepts, however, have been investigated for temporal graphs.
For example, one approach is based on network embeddings where nodes are mapped into certain feature spaces incorporating the temporal behavior~\cite{AK15,ZLLGHW18,NLRAKK18}.
Another approach is based on network alignments~\cite{VCM17,ESCBK19} where a vertex mapping that optimizes some criteria is computed.
However, dynamic time warping has not been used in this context so far.

Dynamic time warping~\cite{SC78} is an established measure for mining time series data~\cite{RCMBWZZK12,WMDTSK13} which is specifically designed to cope with temporal distortion in the data via nonlinear alignment of time series.
It can be applied to time series of different lengths (in contrast to the Euclidean distance for example) which is a relevant aspect in time series averaging.
We lift this approved concept to the domain of temporal graphs.

\paragraph*{Our Contributions.}
We define the dynamic temporal graph warping (dtgw) distance as a twofold discrete minimization problem involving computation of an optimal vertex mapping and an optimal ``warping path'' (see \Cref{sec:DTGW}).
As a byproduct, our approach does not only yield a distance measure but also yields an interpretable mapping between vertices of the two temporal graphs which can, for example, be used for de-anonymization of individuals in social networks.

We show that the dtgw-distance is \NP-hard to compute in general~(\Cref{thm:nphard}). In contrast, we point out several polynomial-time solvable special cases.
This includes the case when either a vertex mapping or a temporal alignment is fixed (\Cref{obs:trivial}), the case of deciding whether the dtgw-distance is zero (\Cref{thm:c=0}), and the case when
the lifetimes of the two temporal graphs differ only by a constant and the warping path length is restricted (\Cref{prop:XPwarplength}).
Moreover, we give a quadratic programming formulation (\Cref{sec:qp}) and propose an efficient and effective heuristic approach (\Cref{sec:heuristic}).

We empirically evaluate the heuristic\footnote{An implementation is freely available at \url{www.akt.tu-berlin.de/menue/software}.} in preliminary experiments on real-world temporal social networks (face-to-face contact and spatial proximity networks) against the quadratic program and some simple baseline methods to show its efficiency and solution quality.
Moreover, we demonstrate that our concept can successfully be used for de-anonymization of real-world temporal social networks and is faster than other existing methods such as DynaMAGNA++~\cite{VCM17} or HTNE~\cite{ZLLGHW18} (\Cref{sec:experiments}). 

Compared to the conference version~\cite{FJNR19}, this version contains the NP-hardness proof of \Cref{thm:nphard}, the proof of~\Cref{prop:XPwarplength} and the quadratic programming formulation (\Cref{sec:qp}). Moreover, we present additional experimental results including a comparison of the heuristic with the quadratic program (\Cref{sec:benchmark}) as well as a clustering experiment (\Cref{sec:cluster}).

\paragraph*{Organization.}
\Cref{sec:prelim} contains basic definitions.
\Cref{sec:DTGW} presents our main definition of the dtgw-distance followed by NP-hardness results in \Cref{sec:hardness} and positive algorithmic results in \Cref{sec:algorithms}. \Cref{sec:experiments} presents experimental results on some real-world data.
We conclude in \Cref{sec:conclusion} with an outlook on future applications and challenges.

\section{Preliminaries}\label{sec:prelim}
For~$T\in\NN$, we define $[T] \coloneqq \{1, 2, \ldots, T\}$.
For a set~$S$, we denote the set of all size-$k$ subsets of~$S$ by~$\binom{S}{k}$.

\paragraph*{Temporal Graphs.}
A \emph{temporal graph} $\mathcal{G} = (V,E_1, E_2, \ldots, E_T)$ consists of a vertex set~$V$ and a sequence of~$T\ge 1$ edge sets $E_i \subseteq \binom{V}{2}$.
By $G_i = (V, E_i)$, we denote the \emph{$i$\textsuperscript{th} layer} of $\mathcal{G}$ and we call~$T$ the \emph{lifetime} of~$\Gg$.
The \emph{underlying graph} of~$\Gg$ is the graph~$(V,\bigcup_{i=1}^TE_i)$.
We remark that all definitions and results in this work can easily be extended to labeled temporal graphs (with vertex and/or edge labels).

\paragraph*{Vertex Mapping.}
A \emph{vertex mapping} between two vertex sets $V$ and $W$ is a set $M \subseteq V  \times W$ containing $\min(|V|,|W|)$ tuples such that each $x \in V \cup W$ is contained in at most one tuple of~$M$.
We denote the set of all vertex mappings between~$V$ and~$W$ by~$\mathcal{M}(V,W)$.
Let~$V_M\subseteq V$ be the subset of vertices in~$V$ that are contained in some tuple of~$M$
($W_M \subseteq W$ is defined analogously). Note that $V_M=V$ or $W_M=W$ holds since $|M|=\min(|V|,|W|)$.

\paragraph*{\textsc{Assignment Problem}.}
Computing optimal vertex mappings between two temporal graphs can be solved via 
the \textsc{Assignment Problem} which is a fundamental problem in combinatorial optimization. Given two sets~$A$ and~$B$ of equal size and a cost function~$c\colon A\times B \rightarrow \Q$, the goal is to find a bijection~$\pi \colon A \rightarrow B$ such that
$\sum_{a\in A}c(a,\pi(a))$
is minimized.
It is well known that the \textsc{Assignment Problem} is solvable in~$\bigO(|A|^3)$ time~\cite[Theorem~12.2]{AMO93}.

\paragraph*{Dynamic Time Warping.}
The dynamic time warping distance~\cite{SC78} is a distance between time series.
It is based on the concept of a warping path.
A \emph{warping path of order~$n\times m$} is a set $p = \set{p_1, \ldots, p_L}$ of~$L\ge 1$ pairs~$p_\ell=(i_\ell, j_\ell)$ such that
\begin{compactitem}
\item $p_1 = (1, 1)$ and $p_L = (n, m)$, and
\item $p_{\ell+1} \in \{(i_\ell+1, j_\ell+1), (i_\ell, j_\ell+1), (i_\ell+1, j_\ell)\}$ for all~$1\le\ell < L$.
\end{compactitem}
We denote the set of all warping paths of order~$n\times m$ by~$\mathcal{P}_{n,m}$.
For two temporal graphs~$\mathcal{G}=(V,E_{1},\ldots,E_T)$, $\mathcal{H}=(W,F_{1},\ldots,F_U)$, every order-$(T\times U)$ warping path~$p$ defines a \emph{warping} between $\mathcal{G}$ and $\mathcal{H}$,
that is, a pair~$(i,j)\in p$ \emph{warps} layer~$G_{i}$ to layer~$H_{j}$.

\paragraph*{Parameterized Complexity.}
We assume the reader to be familiar with basic concepts of computational complexity theory such as \NP-completeness \cite{GJ79}. In parameterized complexity theory~\cite{DF13,CFK+15} one considers running times with respect to two dimensions. One dimension is the size of the input instance~$x$ and the other dimension is a \emph{parameter}~$k$ (usually a numerical value).
An instance of a parameterized problem is a pair~$(x,k)$.
The class \XP contains all parameterized problems that can be solved in polynomial time for every constant parameter value, that is, in $|x|^{f(k)}$~time for some function~$f$ only depending on~$k$.%

\section{Dynamic Temporal Graph Warping (DTGW)}\label{sec:DTGW}

In this section, we define our temporal graph distance based on dynamic time warping using a vertex-signature-based graph distance as cost function.
We choose this graph distance for the following reasons.
First, in contrast to the NP-hard edit distance, it is polynomial-time computable.
Second, it is based on a mapping between the two vertex sets which allows to enforce a consistency over time.
This consistency assumption is useful in many temporal network applications where the vertices in both networks correspond to the same set of objects over time.
This implicitly allows to identify vertices within the two networks.
Third, vertex signatures allow for a high flexibility since they can be chosen arbitrarily in order to incorporate essential information (local or global) for the application at hand (e.g., one might use feature vectors obtained via network embedding).

\paragraph*{Graph Distance Based on Vertex Signatures.}

The following approach is due to \citet{JT09}. For a (static) graph $G=(V,E)$, a \emph{vertex signature function} $f_G \colon V \to \Q^k$ encodes arbitrary information about a vertex.
Let $\dist \colon \Q^k\times \Q^k \to \Q$ be a metric.

For two (static) graphs $G=(V,E)$ and $H=(W,F)$ with vertex signatures $f_G \colon V \to \Q^k$ and $f_H \colon W \to \Q^k$ and a given vertex mapping~$M$ between~$V$ and~$W$, we define the \emph{cost} of~$M$ as
\begin{align*}
C(G,H,M) \coloneqq{}& \sum_{\mathclap{(u, v) \in M}} \dist\big(f_G(u), f_H(v)\big)
+ \sum_{\mathclap{v\in V\setminus V_M}}\eps_G(v) 
+ \sum_{\mathclap{v\in W\setminus W_M}}\eps_H(v),
\end{align*}
where~$\eps_G(v)\in\Q$ is the (predefined) cost of ``deleting'' vertex~$v$ from~$G$ since it is not mapped by~$M$ to any vertex in the other vertex set.
The value~$\eps_G(v)$ might for example depend on the vertex signature of~$v$.
Note that ``deleting'' a vertex does not affect the signatures of other vertices.
Note also that one of the last two sums on the right-hand side above is always zero.
 
The vertex-signature-based distance between~$G$ and~$H$ is then defined as
\[
D(G,H) \coloneqq \min_{M\in\mathcal{M}(V,W)} C(G,H,M).
\]

Depending on the application, one might normalize the distance~$D$ by some appropriate factor (typically depending on~$|V|$ and~$|W|$; e.g., \citet{JT09} normalize by $\min(|V|,|W|)^{-1}$).

Throughout this work, we assume that vertex signature functions~$f_G$ are computable in polynomial time in the size of~$G$ and we assume all metrics~$\dist$ to be polynomial-time computable.
In the rest of the paper, we neglect the running times for computing the values of $f_G$ and~$\dist$ because we assume that all vertex signatures are precomputed once in polynomial time.

\paragraph*{Dynamic Time Warping Distance for Temporal Graphs.}
We transfer the concept of dynamic time warping to temporal graphs in the following way.
Let $\mathcal{G} = (V, E_1, \ldots, E_T)$ and $\mathcal{H} = (W, F_1, \ldots, F_U)$ be two temporal graphs and let
$f_{G_1},\ldots,f_{G_T} \colon V \to \Q^k$ and $f_{H_1}, \ldots,f_{H_U} \colon W \to \Q^k$ be corresponding vertex signature functions.

We then define the vertex-signature-based \emph{dynamic temporal graph warping distance} (dtgw-distance) between $\mathcal{G}$ and $\mathcal{H}$ as 
\begin{align*}
\dtgw(\mathcal{G},\mathcal{H}) \coloneqq \min_{M\in\mathcal{M}(V,W)} \; \min_{p\in\mathcal{P}_{T,U}} \sum_{(i, j) \in p} C(G_i,H_j,M).
\end{align*}

\begin{figure}
\centering
\includegraphics[scale=0.7]{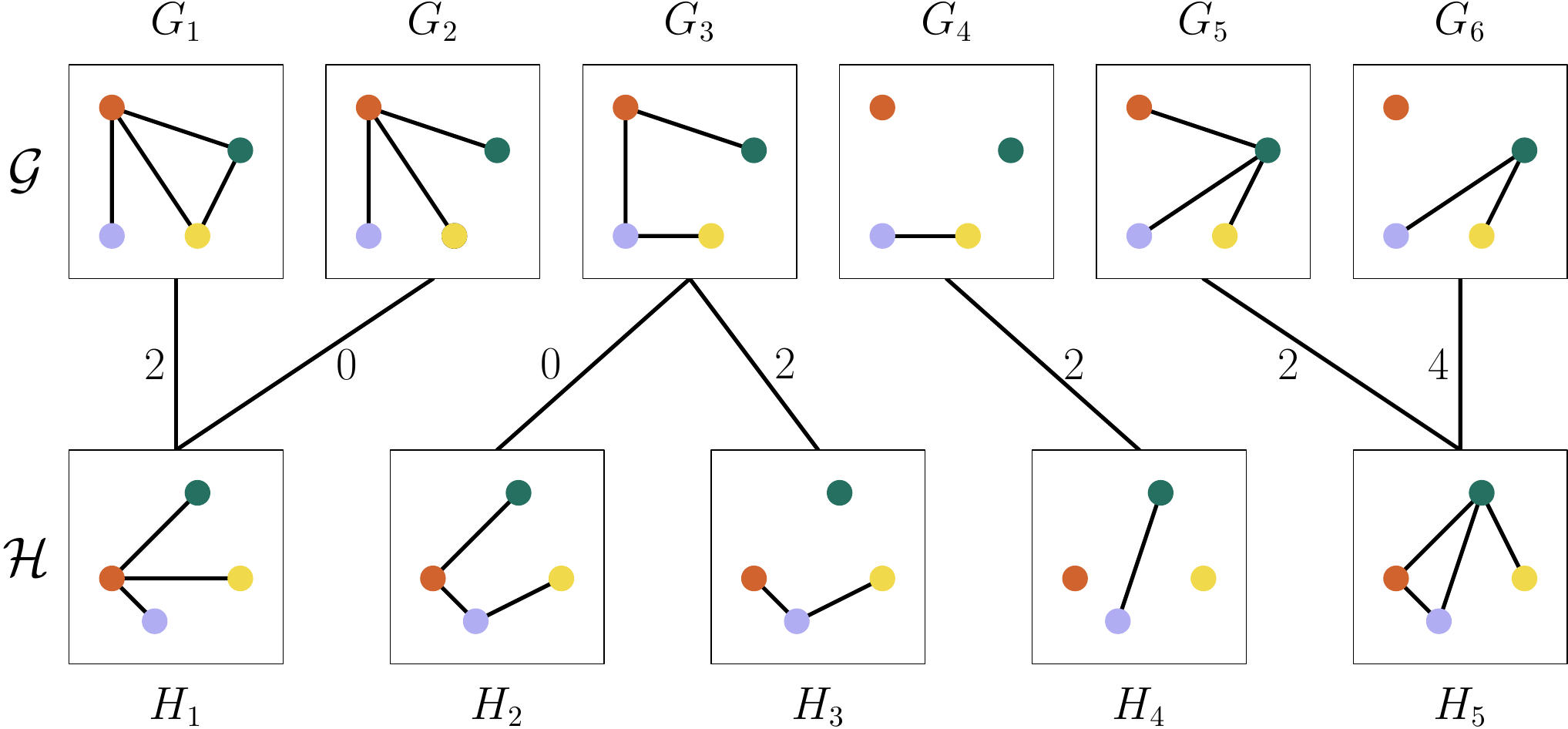}
\caption{Example of the dtgw-distance between two temporal graphs~$\Gg$ and~$\Hh$ on four vertices with lifetimes six and five. The vertex coloring indicates an optimal vertex mapping~$M$.
The connections between the boxes indicate an optimal warping path $p=\{(1,1),(2,1), (3,2), (3,3), (4,4), (5,5), (6,5)\}$.
Their labels correspond to the costs~$C(G_i,H_j,M)$ where the vertex signatures are the degrees and the metric is the absolute value of the difference.
For example, the cost of warping $G_1$ to $H_1$ is 2, as the green and yellow vertex each have degree two in $G_1$ but only degree one in $H_1$.
 The resulting dtgw-distance is $\dtgw(\Gg,\Hh) = 2+0+0+2+2+2+4=12$.}
\label{fig:DTGWexample}
\end{figure}

\noindent
\Cref{fig:DTGWexample} depicts an example illustrating the dtgw-distance of two temporal graphs.
Intuitively, the vertex mapping identifies vertices with similar behavior over time and the warping path identifies the time layers with similar vertex behavior.
Note that (for $T=U)$ if one fixes $p = \set{(1,1),(2,2),\ldots,(T,T)}$, then we get a temporal graph distance without time warping (similar to the Euclidean distance).

The following results are easily observed and play a central role for our subsequent algorithms.
\begin{observation}
  \label{obs:trivial}
  Let $\mathcal{G} = (V, E_1, \ldots, E_T)$ and $\mathcal{H} = (W, F_1, \ldots, F_U)$ be two temporal graphs with $\abs{V} \le \abs{W}\eqqcolon n$.
  \begin{compactenum}[i)]
  \item For a fixed vertex mapping~$M\in\mathcal{M}(V,W)$,
    a warping path $p\in\mathcal{P}_{T,U}$ which minimizes $\sum_{(i,j)\in p}C(G_i,H_j,M)$
    can be computed in~$\bigO(T\cdot U\cdot n)$ time.
  \item For a fixed warping path~$p\in\mathcal{P}_{T,U}$,
    a vertex mapping~$M\in\mathcal{M}(V,W)$ which minimizes $\sum_{(i,j)\in p}C(G_i,H_j,M)$
    can be computed in~$\bigO(n^2\cdot \abs{p} + n^3)$ time.
  \end{compactenum}
\end{observation}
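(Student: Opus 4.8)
The plan is to treat the two parts separately, in each case reducing the relevant minimization to a classical polynomial-time problem after a preprocessing phase.

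For part~i), fix the vertex mapping $M$. Since $\abs{V}\le\abs{W}$ forces $V_M=V$, each cost simplifies to $C(G_i,H_j,M)=\sum_{(u,v)\in M}\dist(f_{G_i}(u),f_{H_j}(v))+\sum_{w\in W\setminus W_M}\eps_{H_j}(w)$, a sum of at most $2n$ precomputed rational terms, so all $T\cdot U$ values $C(G_i,H_j,M)$ can be tabulated in $\bigO(T\cdot U\cdot n)$ time. Minimizing $\sum_{(i,j)\in p}C(G_i,H_j,M)$ over $p\in\mathcal{P}_{T,U}$ is then literally the dynamic time warping problem on this $T\times U$ cost matrix, solved by the standard dynamic program $D[i,j]=C(G_i,H_j,M)+\min\{D[i-1,j-1],D[i-1,j],D[i,j-1]\}$ (with the natural boundary values along the first row and column and answer $D[T,U]$) in $\bigO(T\cdot U)$ time; an optimal warping path is recovered by backtracking. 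The total running time is dominated by the $\bigO(T\cdot U\cdot n)$ preprocessing.

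For part~ii), fix the warping path $p$ and exchange the order of summation in $\sum_{(i,j)\in p}C(G_i,H_j,M)$. Setting $c(u,w)\coloneqq\sum_{(i,j)\in p}\dist(f_{G_i}(u),f_{H_j}(w))$ for all $(u,w)\in V\times W$ and $\delta(w)\coloneqq\sum_{(i,j)\in p}\eps_{H_j}(w)$ for all $w\in W$ (the $\eps_{G_i}$-contributions vanish because $V_M=V$), the objective becomes $\sum_{(u,w)\in M}c(u,w)+\sum_{w\in W\setminus W_M}\delta(w)$, and all $\bigO(n^2)$ quantities $c(u,w)$ and $\delta(w)$ are computed in $\bigO(n^2\cdot\abs{p})$ time. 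To minimize over $M\in\mathcal{M}(V,W)$ we reduce to the \textsc{Assignment Problem}: put $A\coloneqq V\cup D$ with $D$ a set of $\abs{W}-\abs{V}$ fresh elements, $B\coloneqq W$ (so $\abs{A}=\abs{B}=n$), and let the cost of $(u,w)$ be $c(u,w)$ if $u\in V$ and $\delta(w)$ if $u\in D$. A minimum-cost bijection $\pi\colon A\to B$ then yields the optimal vertex mapping $M=\{(u,\pi(u)):u\in V\}$: the elements of $D$ are matched precisely to the unmapped vertices $W\setminus W_M$ and contribute exactly their deletion costs, and conversely every mapping in $\mathcal{M}(V,W)$ arises this way. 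Solving the \textsc{Assignment Problem} on sets of size $n$ costs $\bigO(n^3)$ time, for an overall bound of $\bigO(n^2\cdot\abs{p}+n^3)$.

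I do not expect a genuine obstacle: this observation is of the routine kind. The only points requiring care are the bookkeeping in the summation exchange of part~ii) and making the \textsc{Assignment Problem} reduction faithful — in particular, exploiting $\abs{V}\le\abs{W}$ so that no vertex of $V$ is ever deleted, so that the deletion costs arising on the $W$ side are modelled correctly by the dummy elements.
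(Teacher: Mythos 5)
Your proposal is correct and follows essentially the same route as the paper: part~i) is the standard dynamic time warping dynamic program on the precomputed $T\times U$ cost matrix, and part~ii) pads $V$ with $\abs{W}-\abs{V}$ dummy vertices carrying the deletion costs and solves the resulting \textsc{Assignment Problem} in $\bigO(n^3)$ time after $\bigO(n^2\cdot\abs{p})$ preprocessing. Your explicit remark that $V_M=V$ (so the $\eps_{G_i}$ terms vanish) is a small piece of bookkeeping the paper leaves implicit, but the argument is the same.
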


\begin{proof}
  $i)$~For a given vertex mapping~$M$, an optimal warping path
    can be computed by a well-known dynamic program for dynamic time warping in~$\bigO(T\cdot U \cdot n)$ time~\cite{SC78}.
    Here,~$\bigO(n)$ is the time required to compute the costs~$C(G_i,H_j,M)$.
    Note that faster dynamic time warping algorithms for special cases are known \cite{ABW15,GS18,Kuszmaul19,FJR20}.

    $ii)$
    Let~$V'\coloneqq V \cup Q$, where $Q$ is a set of $|W|-|V|$ dummy vertices (that is, $|V'|=n$) with~$Q\cap V=\emptyset$.
    For every~$(u,v)\in V'\times W$, let
    $$\sigma(u,v) \coloneqq \begin{cases}\sum_{(i,j)\in p}\dist\big(f_{G_i}(u),f_{H_j}(v)\big), &u\in V\\
    \sum_{(i,j)\in p}\eps_{H_j}(v), &u\in Q\end{cases}.$$
    Then, we need to find a vertex mapping $M\in\mathcal{M}(V',W)$ that minimizes $\sum_{(u,v)\in M}\sigma(u,v)$.
    Note that~$M$ defines a bijection between~$V'$ and~$W$.
    Hence, computing~$M$ is an \textsc{Assignment Problem} instance solvable in~$\bigO(n^3)$ time~\cite[Theorem~12.2]{AMO93}.
    Computing all $\sigma(u,v)$ values can be done in~$\bigO(n^2\cdot \abs{p})$ time.
\end{proof}

Note that \Cref{obs:trivial}~i) implies that if we already know the vertex mapping up to a constant number of vertices, then $\dtgw$ can be computed in polynomial time (since we can try out all polynomially many possible vertex mappings).
Furthermore, \Cref{obs:trivial}~ii) implies that~$\dtgw$ is polynomial-time computable if the optimal temporal alignment between $\Gg$ and $\Hh$ is known beforehand.
In particular, $\dtgw$ can be computed in polynomial time if one temporal graph has a constant lifetime or a constant number of vertices since there are only polynomially many possible warping paths or polynomially many vertex mappings.

\begin{corollary}
  The dtgw-distance between two temporal graphs can be computed in polynomial time if at least one of the following applies:
  \begin{compactenum}[i)]
    \item The vertex mapping is known up to a constant number of vertices.
    \item The warping path is known.
    \item At least one of the temporal graphs has a constant lifetime or a constant number of vertices.
  \end{compactenum}
  
\end{corollary}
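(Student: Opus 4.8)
The plan is to obtain all three cases as direct corollaries of \Cref{obs:trivial}: in each case I reduce the computation of $\dtgw(\Gg,\Hh)$ to polynomially many invocations of one of the two parts of \Cref{obs:trivial} and return the minimum of the computed costs. Since $\dtgw(\Gg,\Hh)$ is by definition a minimum over all vertex mappings and all warping paths, it suffices to enumerate a polynomial-size family that is guaranteed to contain an optimal pair and to optimize the remaining component exactly for each member of the family.

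For case~i), assume the vertex mapping is fixed on all but a constant number~$c$ of vertices. I first observe that such a partial mapping admits only $\bigO(n^c)$ completions to a full mapping in $\mathcal{M}(V,W)$, since each of the at most~$c$ still-unmatched vertices on the smaller side can be matched to one of at most~$n$ vertices on the other side. For each completion~$M$ I call \Cref{obs:trivial}~i) to compute an optimal warping path together with its cost in $\bigO(T\cdot U\cdot n)$ time, and finally output the smallest cost found. This is correct because the true optimal mapping is among the enumerated completions, and the total running time $\bigO(n^{c+1}\cdot T\cdot U)$ is polynomial for constant~$c$. Case~ii) is immediate from \Cref{obs:trivial}~ii): given the (optimal) warping path~$p$, we compute a cost-minimizing vertex mapping for~$p$ in $\bigO(n^2\abs{p}+n^3)$ time, which is polynomial since $\abs{p}\le T+U-1$.

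For case~iii) I distinguish the two subcases. If one of the lifetimes, say~$T$, is a constant, I claim that $\abs{\mathcal{P}_{T,U}}$ is polynomial in~$U$: for every warping path the cells in a fixed row~$i$ form a contiguous column-interval, so the whole path is determined by the last column~$b_i$ visited in each row $i\in[T-1]$ (with $b_T=U$) together with the type of each of the $T-1$ row transitions (``straight'' versus ``diagonal''), giving at most $(2U)^{T-1}$ paths; the symmetric bound $(2T)^{U-1}$ handles constant~$U$. Enumerating all of these and applying \Cref{obs:trivial}~ii) to each yields $\dtgw(\Gg,\Hh)$ in polynomial time. If instead one of the vertex sets, say~$V$, has constant size, then $\abs{\mathcal{M}(V,W)}\le\binom{n}{\abs{V}}\cdot\abs{V}!=\bigO(n^{\abs{V}})$ is polynomial, and we enumerate all vertex mappings, apply \Cref{obs:trivial}~i) to each, and take the minimum; the case of constant~$\abs{W}$ is symmetric.

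Essentially nothing here is hard: the computations are exactly the two subroutines of \Cref{obs:trivial}, and the only points that need a (short) argument are the three combinatorial counting bounds --- polynomially many completions of a partial mapping, polynomially many warping paths when one lifetime is constant, and polynomially many vertex mappings when one vertex set is constant. Verifying these bounds, and checking that plugging them into the running times of \Cref{obs:trivial} stays polynomial, is the only place where I would spend any care.
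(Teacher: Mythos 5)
Your proposal is correct and follows essentially the same route as the paper, which derives all three cases from \Cref{obs:trivial} by noting that one can enumerate the polynomially many candidate vertex mappings (cases i and the constant-vertex-number subcase of iii) or warping paths (the constant-lifetime subcase of iii) and optimize the other component exactly. The only difference is that you spell out the combinatorial counting bounds explicitly, which the paper leaves as a brief remark.
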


For given vertex signature function and metric, we refer to the decision problem of testing whether
two temporal graphs have dynamic temporal graph warping distance at most some given value~$c$ by \DTGW.

\problemdef{Dynamic Temporal Graph Warping (DTGW)}
{Two temporal graphs~$\mathcal{G}$ and $\mathcal{H}$, $c\in\Q$.}
{Is $\dtgw(\mathcal{G},\mathcal{H}) \le c$?}

\section{Computational Hardness}\label{sec:hardness}

Even though the dynamic time warping distance and the vertex-signature-based graph distance are both computable in polynomial time, 
their combined application to temporal graphs yields a distance measure that is generally \NP-hard to compute. Intuitively, this is due to the fact that the vertex mapping has to be consistent for all layers. This introduces non-trivial dependencies between the time warping and the vertex mapping
which render the problem computationally hard.
Indeed, this is not a singular case for temporal graph problems where for many cases the temporal counterparts of problems solvable in polynomial time turn \NP{}-hard; examples include the computation of matchings in graphs \cite{HHKNRS19,BBR20,MMNZZ20}, short path computations \cite{CHMZ19,FNSZ20}, or the computation of separators \cite{ZFMN20}.

\begin{theorem}\label{thm:nphard}
\DTGW is \NP{}-complete for every metric when the vertex signatures are vertex degrees.
\end{theorem}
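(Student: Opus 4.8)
The plan is to prove \NP{}-completeness in two parts. Membership in \NP{} is immediate from \Cref{obs:trivial}: a certificate for a YES-instance is a pair consisting of a vertex mapping $M\in\mathcal{M}(V,W)$ and a warping path $p\in\mathcal{P}_{T,U}$, both of polynomial size (a warping path has at most $T+U-1$ pairs), and the inequality $\sum_{(i,j)\in p}C(G_i,H_j,M)\le c$ can be checked in polynomial time since the signature functions and the metric $\dist$ are polynomial-time computable. So the substance is \NP{}-hardness, which I would establish by a polynomial-time many-one reduction from \threesat{}. Note that the metric $\dist$ is fixed as part of the target problem, so the reduction may inspect it; proving the statement ``for every metric'' then amounts to giving one reduction template that works for each choice of $\dist$.

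Given a \threesat{} formula $\varphi$ with variables $x_1,\dots,x_N$ and clauses $c_1,\dots,c_M$, I would construct two temporal graphs $\Gg$ and $\Hh$ on vertex sets of equal size, so that deletion costs vanish and $C(G_i,H_j,M)=\sum_{(u,v)\in M}\dist(\deg_{G_i}(u),\deg_{H_j}(v))$, out of three kinds of components. First, there are \emph{variable gadgets}: for each $x_i$ a small block of vertices whose image under any cheap mapping $M$ is forced into one of two shapes, encoding $x_i\mapsto\true$ or $x_i\mapsto\false$; a mapping consistent with a single truth assignment costs $0$ on these vertices in every layer, whereas a ``mixed'' mapping is penalised. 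Second, there are \emph{clause blocks}: for each $c_j$ a block of consecutive layers of $\Gg$ and of $\Hh$, tuned so that a warping path can traverse the block at cost $0$ exactly when it is allowed to ``linger'' on a sub-block witnessing a literal of $c_j$ satisfied by the mapping, and at cost at least $\delta$ otherwise, where $\delta\coloneqq\min\{\dist(a,b):a\ne b\text{ among the boundedly many degree values used}\}>0$. Third, there are \emph{rigidity and padding components}: auxiliary vertices, edges, and layers that force the warping path to respect the block structure, pin down the variable gadgets, and make the prescribed degree sequence of \emph{every} layer graphical (for instance by adding a controlled number of dummy vertices and edges). Taking $c\coloneqq c_0$, where $c_0$ is the cost the rigid parts unavoidably incur in any legal solution (necessarily positive, in accordance with \Cref{thm:c=0}), I would then prove that $\dtgw(\Gg,\Hh)\le c$ holds iff $\varphi$ is satisfiable.

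The correctness argument runs in the usual two directions. From a satisfying assignment, read $M$ off the variable gadgets and choose $p$ so that in each clause block it lingers on a satisfied-literal sub-block; then check the total cost equals $c_0$. Conversely, given $M$ and $p$ with cost at most $c_0$, invoke the rigidity components to show that $M$ must be consistent with a single truth assignment and that $p$ must pass through a satisfied-literal sub-block of every clause block, so that the induced assignment satisfies $\varphi$. Robustness over all metrics is obtained by using only a bounded set of degree values and expressing every estimate through $\delta$ and the maximum value of $\dist$ on those degrees, making the relevant gap $c_0$ versus at least $c_0+\delta$ independently of $\dist$.

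The main obstacle, I expect, is designing the gadgets so that the hardness genuinely resides in the \emph{interaction} between the vertex mapping and the warping path: by \Cref{obs:trivial}, fixing either object alone leaves a polynomial-time problem, so neither the mapping alone nor the warping path alone may carry the \threesat{} instance --- a clause block must become cheap only when a good mapping and a good lingering pattern cooperate. This has to be achieved while simultaneously keeping the degree sequence of every layer graphical (which restricts how freely auxiliary structure may be added and forces careful counting) and keeping the construction of polynomial size with a clean, metric-independent cost gap. Reconciling these requirements is the technical heart of the proof.
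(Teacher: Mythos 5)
Your plan matches the paper's proof in every architectural respect: both reduce from \threesat{}, use one layer per variable (a single edge $\{x_i,\overline{x_i}\}$ versus $\{\top_i,\bot_i\}$) to force the vertex mapping to encode a truth assignment, use a block of consecutive layers per clause in which the warping path's choice of traversal selects one literal whose cost contribution is minimal precisely when that literal's vertex is mapped to its $\top$-partner, insert heavy ``separation'' layers (many disjoint dummy edges) to rigidify the warping path between blocks, set the threshold $c$ to the positive cost the clause blocks unavoidably incur, and obtain metric-independence by using only degrees $0$ and $1$ so that every cost is a multiple of $d(0,1)$. The one step you explicitly defer --- the concrete clause gadget --- is indeed the technical heart and the only thing separating your outline from a proof; the paper realizes it with three layers per clause on each side, where each of the three $\mathcal{H}$-layers ``activates'' exactly one literal edge $\{C_j'^{k},\top_{\nu(j,k)}\}$ while padding the other literals' degrees, so that each block costs at least $42\,d(0,1)$ with equality if and only if the literal selected by the warping path is mapped to $\top$, and the tight budget then simultaneously forces variable-consistency of $M$ outside the blocks.
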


\begin{proof}
\DTGW is clearly contained in \NP{} since for a given vertex mapping and warping path (both having polynomial size), one can check in polynomial time whether the~$\dtgw$-distance is at most~$c$ (also see \Cref{obs:trivial}).

To show \NP{}-hardness, we give a polynomial-time many-one reduction from the \NP{}-complete \threesat{} problem. 
Let $d: \Q \times \Q \to \Q$ be any metric 
and let $\phi = C_1 \wedge \ldots \wedge C_m$ be an instance of \threesat{} over the variables $x_1, \ldots, x_n$.
Each clause $C_j$ is then a disjunction of three literals $C_j =: \ell_j^1 \lor \ell_j^2 \lor \ell_j^3$ 
and there is a function $\nu: [m] \times [3] \to [n]$ such that 
$\ell_j^i \in \set{x_{\nu(j, i)}, \overline{x_{\nu(j, i)}}}$ holds for all $\ell_j^i$.
Without loss of generality we assume $m > 8$.

The idea is to represent each literal by a vertex which can be mapped to either $\top$ (\true) or $\bot$ (\false).
We then build, for each clause, a \emph{clause box} gadget consisting of three consecutive layers.
The choice of a warping path will then, for each clause, implicitly select one of its literals and the costs
caused by each clause box will attain their minimum value if and only if that particular literal is mapped to $\top$.

Now a detailed description of the reduction follows.
Let $D$ and $D'$ be two copies of the graph $\coprod_{i=1}^{22m} K_2$ (consisting of $22m$ disjoint edges), where for each vertex $v \in V(D)$ we denote its copy in $V(D')$ by $v'$.
We construct two temporal graphs $\mathcal{G}$ and $\mathcal{H}$.
Their vertex sets each contain $2n + 47m + 8$ vertices as follows.
\begin{align*}
V(\mathcal{G}) &\coloneqq \set{x_i, \overline{x_i} ; i\in [n]} \cup 
\set{C_j^1, C_j^2, C_j^3 ; j \in [m]}
\cup \set{X_i, Y_i ; i \in [4]} \cup V(D),\\
V(\mathcal{H}) &\coloneqq \set{\top_i, \bot_i ; i\in [n]} \cup 
\set{C_j'^1, C_j'^2, C_j'^3 ; j \in [m]}
\cup \set{X'_i, Y'_i ; i \in [4]} \cup V(D').
\end{align*}
Both temporal graphs have~$2n+26m$ layers defined as follows.
For each $i \in [n]$, we set
\begin{align*}
  E(G_{2i-1}) &\coloneqq \{\{x_i, \overline{x_i}\}\}, & E(H_{2i-1}) &\coloneqq \{\{\top_i, \bot_i\}\},\\
  E(G_{2i}) &\coloneqq E(D), & E(H_{2i}) &\coloneqq E(D') \,.
\end{align*}
For $j \in [m]$, we set
\begin{align*}
E(G_{2n + 4j - 3}) &\coloneqq \set{\set{X_i, Y_i} ; i \in [4]}, \\
E(G_{2n + 4j - 2}) &\coloneqq \set{\{C_j^i, \ell_j^i\}; i \in [3]}, \\
E(G_{2n + 4j - 1}) &\coloneqq \set{\set{X_i, Y_i} ; i \in [4]}, \\
  E(G_{2n + 4j}) &\coloneqq E(D),
\end{align*}
and
\begin{align*}
E(H_{2n + 4j - 3}) \coloneqq{}& \set{\set{C_j'^1, \top_{\nu(j, 1)}}, \set{\top_{\nu(j,2)}, \bot_{\nu(j,2)}}, \set{\top_{\nu(j,3)}, \bot_{\nu(j,3)}}, \set{C_j'^2, C_j'^3} },\\
E(H_{2n + 4j - 2}) \coloneqq{}& \set{\set{C_j'^2, \top_{\nu(j,2)}}, \set{\top_{\nu(j,1)}, \bot_{\nu(j,1)}}, \set{\top_{\nu(j,3)}, \bot_{\nu(j,3)}}, \set{C_j'^1, C_j'^3}} \\
& \cup \set{\set{X'_i, Y'_i}; i \in [4]},\\
E(H_{2n + 4j - 1}) \coloneqq{}& \set{\set{C_j'^3, \top_{\nu(j,3)}}, \set{\top_{\nu(j,1)}, \bot_{\nu(j,1)}}, \set{\top_{\nu(j,2)}, \bot_{\nu(j,2)}}, \set{C_j'^1, C_j'^2} },\\
E(H_{2n + 4j}) \coloneqq{}& E(D') \,.
\end{align*}
Finally, for $j \in [22m]$, we set
\begin{align*}
E(G_{2n+4m + j}) &:= \set{\set{X_k, Y_k} ; k \in [4]}, &
E(H_{2n+4m + j}) &:= \set{\set{X'_k, Y'_k} ; k \in [4]} \,.
\end{align*}

We call the layers containing $\abs{E(D)}$ edges \emph{separation layers}.
Furthermore, for each $j \in [m]$ we say that the layers $2n+4j - 3$, $2n+4j-2$, and $2n+4j-1$ form the \emph{clause block} corresponding to $C_j$ (see \cref{fig:clause_box} for an example).

\begin{figure}
\includegraphics[width=\textwidth]{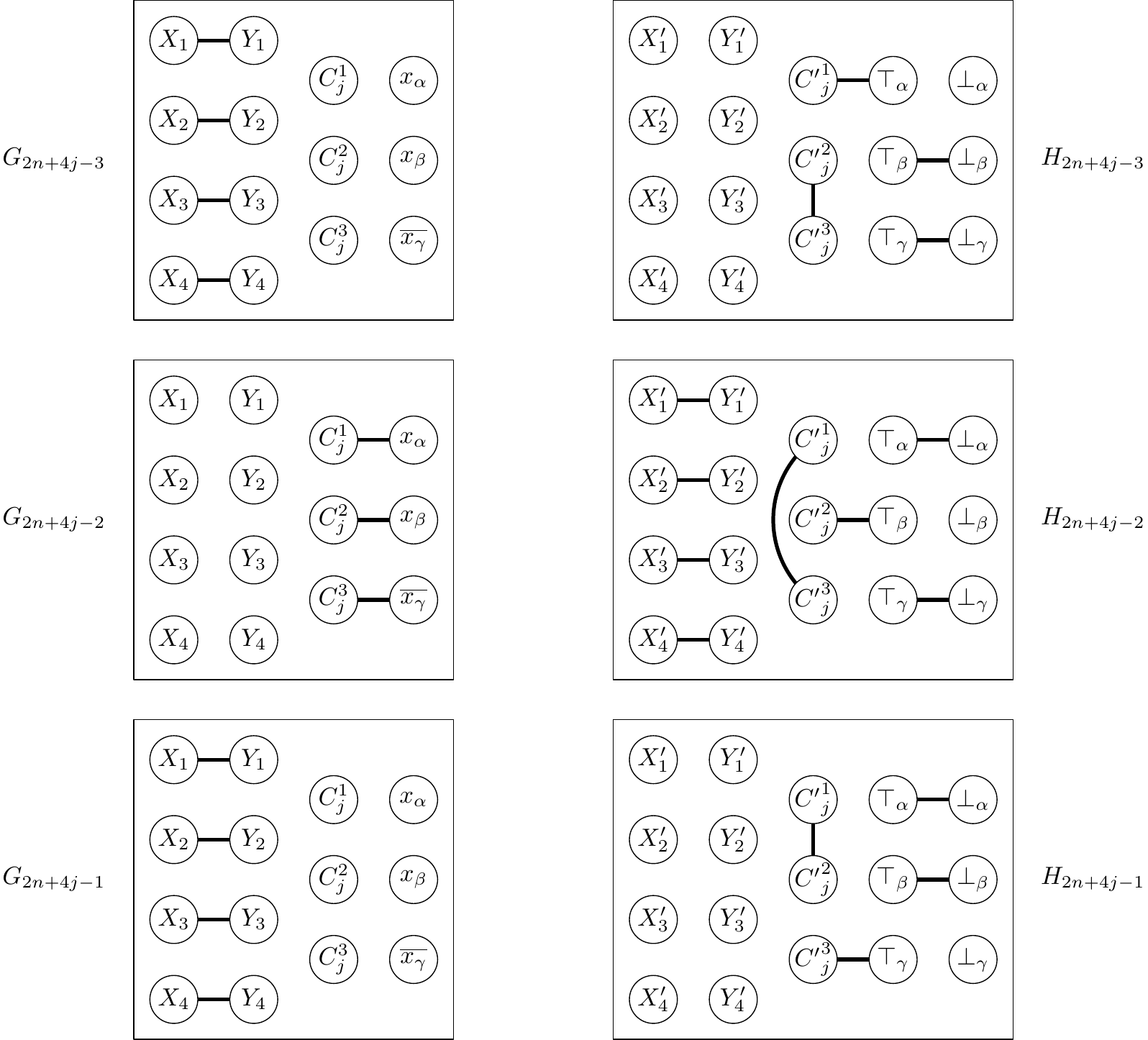}
\caption{Clause block for $C_j = x_\alpha \lor x_\beta \lor \overline{x_\gamma}$. Only relevant vertices are shown in each layer.}
\label{fig:clause_box}
\end{figure}

Let $c\coloneqq  42m \cdot d(0, 1)$. We claim that $\dtgw(\Gg, \Hh) \leq c$ if and only if $\phi$ has a satisfying assignment.

\medskip

``$\Leftarrow$'':
Given a satisfying assignment $\beta \colon \{x_1,\ldots,x_n\}\rightarrow \{\true,\false\}$ of~$\phi$, we define the following vertex mapping
\begin{align*}
M :={}& \set{(x_i , \top_i), (\overline{x_i}, \bot_i) ; \beta(x_i)=\true} \\
& \cup \set{(x_i, \bot_i), (\overline{x_i}, \top_i); \beta(x_i)=\false}\\
& \cup \set{(C_j^i, {C'}_j^i) ;  i \in [3], j \in [m]} \\
& \cup \set{(X_i, X'_i), (Y_i, Y'_i) ; i \in [4]} \\
& \cup \set{(v, v') ; v \in V(D)}.
\end{align*}

\begin{figure}
\centering
\begin{subfigure}{0.32\textwidth}
\includegraphics[scale=0.9,page=1]{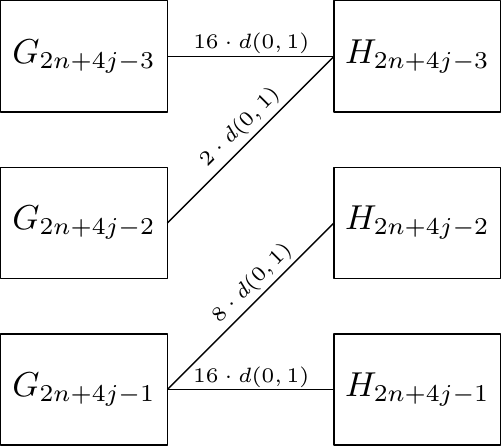}\caption{}
\label{fig:warping1}
\end{subfigure}
\begin{subfigure}{0.32\textwidth}
\includegraphics[scale=0.9,page=2]{figures/NPhardness_clausebox_warping.pdf}\caption{}
\label{fig:warping2}
\end{subfigure}
\begin{subfigure}{0.32\textwidth}
\includegraphics[scale=0.9,page=3]{figures/NPhardness_clausebox_warping.pdf}\caption{}
\label{fig:warping3}
\end{subfigure}
\caption{The three possible warpings between layers of a clause block. 
  Each edge is labeled with the minimal cost it causes under the assumption 
that the set $\set{X_i, Y_i; i \in [4]}$ is mapped to $\set{X'_i, Y'_i; i \in [4]}$.}
\label{fig:warping}
\end{figure}

To construct a warping path, we begin by defining, for each $j \in [m]$, the following three sub-paths (see also \cref{fig:warping}):
\begin{align*}
\pi_j^1 &:= \big\{ (2n+4j-2, \; 2n+4j-3),   (2n+4j-1,\; 2n+4j-2)\big\},\\
\pi_j^2 &:= \big\{ (2n+4j-2, \;2n+4j-2) \big\},\\
\pi_j^3 &:= \big\{  (2n+4j-3, \; 2n+4j-2), (2n+4j-2, \; 2n+4j-1) \big\}.
\end{align*}
For each clause $C_j = \ell_j^1 \lor \ell_j^2 \lor \ell_j^3$, pick $k_j \in [3]$ such that $\ell_j^{k_j}$ is true.
We then build the warping path~$p$ as the union of all $\pi_j^{k_j}$, using the trivial warping path for all remaining layers:
\begin{align*}
p := \set*{(i, i) ; i \in [2n+4m+22m] \setminus \set{2n+4j-2 ; j \in [m]}}
\cup \bigcup_{j\in[m]} 
\pi_j^{k_j}.
\end{align*}

It is then not difficult to calculate that each clause block adds cost of exactly $42 \cdot d(0, 1)$ and there are no other costs.
Thus $\dtgw(\Gg, \Hh) \leq 42m \cdot d(0, 1) = c$.

\medskip

``$\Rightarrow$'':
Now suppose that $\dtgw(\Gg, \Hh) \leq c$ and let $(M, p)$ be a pair of vertex mapping~$M$ and warping path~$p$ with cost $\sum_{(i,j) \in p} C(G_i, H_j, M) = \dtgw(\Gg, \Hh)$.
Note that any non-separation layer contains at most eight edges.
So if $p$ warps any separation layer to any non-separation layer, then the resulting cost would be at least $(44m - 16) \cdot d(0, 1) > c$.
Thus, we may assume that every separation layer~$i$ of~$\mathcal{G}$ is  only warped to layer~$i$ of~$\mathcal{H}$ and vice versa.
Since the last $22m$ layers of each temporal graph are all identical and $M$ and~$p$ are chosen to have minimal cost, we can conclude that 
\begin{align*}
p \supset \set*{(i, i) ; i \in [2n+4m+22m] \setminus \set{2n+4j-2 ; j \in [m]}}.
\end{align*}

If~$M$ maps some vertex from $\set{X_k, Y_k ; k \in [4]}$ to some vertex that is not in~$\set{X'_k, Y'_k; k \in [4]}$, then the $22m$ layers $2n + 4m + 1,\ldots,2n + 4m + 22m$ each would cause cost of at least $2 \cdot d(0, 1)$, thus exceeding~$c$ in total.
Hence, $M$~has to contain a bijection from $\set{X_k, Y_k; k \in [4]}$ to $\set{X'_k, Y'_k; k \in [4]}$.

Now, consider the clause block corresponding to $C_j = \ell_j^1 \lor \ell_j^2 \lor \ell_j^3$.
From the arguments above, it follows that $G_{2n+4j-3}$ and $G_{2n+4j-1}$ are warped to $H_{2n+4j-3}$ and $H_{2n+4j-1}$, respectively.
This already costs $32 \cdot d(0, 1)$.
We distinguish three cases (corresponding to $\pi_j^1$ through $\pi_j^3$ above):
\begin{enumerate}[(1)]
\item\label{case1} $G_{2n+4j-2}$ is warped to $H_{2n+4j-3}$.
This causes costs of at least $2 \cdot d(0, 1)$.
Then, $H_{2n+4j-2}$ must be warped to $G_{2n+4j-1}$ or $p$ would not have minimal cost.
Thus, there are additional costs of at least $8 \cdot d(0, 1)$.
This is the situation illustrated in \cref{fig:warping1}.
\item\label{case2} $G_{2n+4j-2}$ is warped to $H_{2n+4j-2}$. This causes costs of at least $10\cdot d(0, 1)$.
This is the situation illustrated in \cref{fig:warping2}.
\item\label{case3} $G_{2n+4j-2}$ is warped to $H_{2n+4j-1}$.
This case is symmetric to (\ref{case1}) and also causes costs of at least $10 \cdot d(0, 1)$.
This is the situation illustrated in \cref{fig:warping3}.
\end{enumerate}
In summary, the costs contributed by each clause block are at least $42 \cdot d(0, 1)$.
Hence, to meet the bound of $c$, all layers outside of clause blocks must not cause any additional cost.
For each $i \in [n]$, since $G_{2i-1}$ is warped to $H_{2i-1}$, this implies that either $\{(\top_i,x_i),(\bot_i,\overline{x_i})\}\subset M$ or $\{(\top_i,\overline{x_i}),(\bot_i,x_i)\}\subset M$.

Furthermore, for each $j \in [m]$, the clause block corresponding to $C_j$ must have cost of exactly $42 \cdot d(0, 1)$.
If we are in Case~(\ref{case1}) as above, then this is only possible if~$M$ maps each degree-1 vertex of~$G_{2n+4j-2}$ to some degree-1 vertex of~$H_{2n+4j-3}$.
Thus, $\left(\ell_j^2, \top_{\nu(j,2)}\right) \in M$.
Otherwise, if we are in Case~(\ref{case2}) respectively Case~(\ref{case3}),
then analogous arguments yield that
$\left(\ell_j^1, \top_{\nu(j, 1)} \right) \in M$ respectively
$\left(\ell_j^3, \top_{\nu(j, 3)} \right) \in M$.
Hence, in any case there is some $i \in [3]$ for which $\left(\ell_j^i, \top_{\nu(j, i)}\right) \in M$.

Consequently, 
\begin{align*}
\beta(x_i) \coloneqq \begin{cases}
\true, & \text{if $(x_i, \top_i) \in M$}\\
\false, & \text{if $(\overline{x_i}, \top_i) \in M$}
\end{cases}
\end{align*}
is a satisfying assignment for $\phi$.
\end{proof}

Let us take a closer look at the reduction in the proof of \cref{thm:nphard}.
Note that the corresponding optimal warping path is always close to the diagonal (that is, $|i-j|\le 1$ holds for every pair~$(i,j)$). Hence, it lies within the so-called Sakoe-Chiba band~\cite{SC78} of width~one.
Moreover, the maximum degree in each layer is one.
Finally, the number of vertices and the number of layers of both temporal graphs and the target cost~$c$ are all upper-bounded linearly in the size of the \threesat{} formula, which allows to
conclude a running time lower bound based on the \emph{Exponential Time Hypothesis}\footnote{%
	The Exponential Time Hypothesis asserts that \threesat cannot be solved in subexponential time,
	that is, that there is no $2^{o(n)}\cdot\text{poly}(m)$-time algorithm, where $n$~is the number of variables and~$m$ is the number of clauses of the input formula.
}~\cite{IP01} (together with the \emph{Sparsification Lemma}~\cite{IPZ01}).
These observations are summarized in the following corollary.
(Recall that $V$, $W$ are the vertex sets and $T$, $U$ are the lifetimes of $\Gg$, $\Hh$.)

\begin{corollary}
  \DTGW is \NP{}-complete for every metric and vertex degrees as vertex signatures even when the maximum degree of each layer is one and the warping path is restricted to the Sakoe-Chiba band of width one.
  Moreover, even this very restricted variant cannot be solved in $2^{o(|V|+|W|+T+U+c)}\cdot \mathrm{poly}(|\Gg|+|\Hh|)$ time unless the Exponential Time Hypothesis fails.
\end{corollary}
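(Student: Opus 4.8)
The plan is not to construct a new reduction but to obtain the corollary by inspecting the one built in the proof of \Cref{thm:nphard} and checking that it already has the claimed restrictions and that its parameters scale linearly. Concretely, I would verify: (a) every layer of $\Gg$ and $\Hh$ has maximum degree one; (b) the witnessing warping path exhibited in the ``$\Leftarrow$'' direction lies in the Sakoe--Chiba band of width one, and restricting to that band does not change the answer; and (c) all of $\abs{V},\abs{W},T,U,c$ are $\bigO(n+m)$.

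For (a), note that in that construction each layer is a disjoint union of edges: the layers $E(D)$, $E(D')$ are unions of $K_2$'s; the layers $\set{\set{X_i,Y_i} ; i\in[4]}$ and their primed counterparts are matchings on eight vertices; the layers $E(G_{2i-1})$, $E(H_{2i-1})$ are single edges; and the clause-block layers are matchings too, provided one assumes --- without loss of generality for \threesat{} --- that the three literals of each clause involve three distinct variables. Hence every layer has maximum degree one. For (b), observe that the warping path $p$ assembled in the ``$\Leftarrow$'' direction consists of diagonal pairs $(i,i)$ together with the pieces $\pi_j^{k_j}$, and each $\pi_j^1,\pi_j^2,\pi_j^3$ deviates from the diagonal by at most one; so $\abs{i'-j'}\le 1$ for every $(i',j')\in p$, i.e.\ $p$ lies in the band of width one. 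Writing $\dtgw_1$ for the value obtained by minimizing only over warping paths inside this band, we have $\dtgw_1(\Gg,\Hh)\ge\dtgw(\Gg,\Hh)$ since the band is a subset of $\mathcal{P}_{T,U}$; on the other hand the ``$\Leftarrow$'' construction gives $\dtgw_1(\Gg,\Hh)\le c$ whenever $\phi$ is satisfiable, and $\dtgw_1(\Gg,\Hh)\le c$ implies $\dtgw(\Gg,\Hh)\le c$, which by the ``$\Rightarrow$'' direction of \Cref{thm:nphard} forces $\phi$ to be satisfiable. So the reduction stays correct for the band-restricted problem, and membership in \NP{} holds by the same certificate argument as in \Cref{thm:nphard}; this proves the first sentence of the corollary.

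For the running-time bound I would argue by contraposition. The reduction runs in polynomial time and, on a \threesat{} formula with $n$ variables and $m$ clauses, outputs temporal graphs with $\abs{V}=\abs{W}=2n+47m+8$ and $T=U=2n+26m$, and sets $c=42m\cdot d(0,1)$; since the metric $d$ is fixed, $d(0,1)$ is a constant and thus $\abs{V}+\abs{W}+T+U+c=\bigO(n+m)$. Therefore an algorithm deciding the band-restricted \DTGW{} in $2^{o(\abs{V}+\abs{W}+T+U+c)}\cdot\mathrm{poly}(\abs{\Gg}+\abs{\Hh})$ time would, composed with the reduction, decide \threesat{} in $2^{o(n+m)}\cdot\mathrm{poly}(n+m)$ time; feeding in instances sparsified via the \emph{Sparsification Lemma}~\cite{IPZ01} (for which $m=\bigO(n)$) then yields a $2^{o(n)}$-time algorithm for \threesat{}, contradicting the Exponential Time Hypothesis~\cite{IP01}. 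I do not expect a real obstacle here --- the corollary is essentially an audit of the existing reduction --- the only point worth a second look being the correctness of the band restriction in step (b), which is immediate once one notes that the ``$\Rightarrow$'' analysis of \Cref{thm:nphard} never appeals to a warping path that leaves the band.
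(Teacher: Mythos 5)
Your proposal is correct and follows the same route as the paper, which likewise obtains the corollary by auditing the reduction from \Cref{thm:nphard}: observing that every layer is a matching, that the witnessing warping path satisfies $|i-j|\le 1$, and that $|V|$, $|W|$, $T$, $U$, and $c$ are all linear in the formula size, then invoking the Exponential Time Hypothesis together with the Sparsification Lemma. Your extra care about clauses with repeated variables and about why restricting to the band preserves both directions of the equivalence only makes explicit what the paper leaves implicit.
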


Furthermore, if the dtgw-distance is normalized (e.g., divided by the number of vertices),
then we obtain \NP-hardness for a constant value of~$c$ (by the reduction in the proof of \Cref{thm:nphard}).

\begin{corollary}\label[corollary]{cor:constc}
  \DTGW with normalized vertex-signature-based distance is \NP-complete for a constant value of~$c$.
\end{corollary}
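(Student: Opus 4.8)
The plan is to reuse, essentially unchanged, the polynomial-time reduction from \threesat{} constructed in the proof of \Cref{thm:nphard}, and to pad it slightly so that, after normalization, the target value no longer depends on the input formula. Recall that in that reduction both temporal graphs have the \emph{same} number of vertices, namely $N_0 \coloneqq 2n + 47m + 8$, and the target cost is $c = 42m\cdot d(0,1)$. If the vertex-signature-based distance is normalized by $\min(\abs V,\abs W)$ (as suggested after the definition of $D$ and done by \citet{JT09}), then each term $C(G_i,H_j,M)$, and hence the whole dtgw-distance, is divided by $\min(\abs V,\abs W) = N_0$, so the natural threshold becomes $c/N_0$. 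This still depends on $n$ and $m$, so the task is to make the (normalized) vertex count proportional to $m$ alone, matching the $\Theta(m)$ growth of $c$.

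To do this I would first note that we may assume every variable of $\phi$ occurs in some clause (otherwise delete it), so $n\le 3m$ and, using $m>8$, $N_0 \le 53m+8 < 84m$. Then I would enlarge $V(\Gg)$ and $V(\Hh)$ by $r\coloneqq 84m-N_0>0$ fresh vertices each, say $z_1,\dots,z_r$ in $\Gg$ and $z_1',\dots,z_r'$ in $\Hh$, declared isolated in every layer and with ``deletion'' cost $0$ in every layer. Since these vertices have degree $0$ in all layers of both graphs, $d(f_{G_i}(z_s),f_{H_j}(z_t')) = d(0,0) = 0$ for all $i,j,s,t$. The key claim to verify is that some optimal pair $(M,p)$ matches the new $\Gg$-vertices bijectively to the new $\Hh$-vertices (so that the padding contributes nothing and the remaining analysis of \Cref{thm:nphard} applies verbatim to the ``real'' part). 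This follows from the triangle inequality of $d$: whenever $M$ routes a real pairing through a dummy, i.e.\ $z_s\mapsto w$ and $v\mapsto z_t'$ with $v,w$ real, then for every $(i,j)\in p$ we have $d(f_{G_i}(v),f_{H_j}(w)) \le d(f_{G_i}(v),0) + d(0,f_{H_j}(w))$, so rewiring to the pairs $v\mapsto w$ and $z_s\mapsto z_t'$ does not increase the cost; since the numbers of real vertices agree on both sides, finitely many such rewirings turn $M$ into one that splits as a real--real bijection plus a dummy--dummy bijection. Hence the padded instance has exactly the same dtgw-distance as the original, so $\dtgw(\Gg,\Hh)\le c$ iff $\phi$ is satisfiable.

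Finally, in the padded instance both temporal graphs have exactly $84m$ vertices, so the normalized threshold is $c/(84m) = 42m\cdot d(0,1)/(84m) = d(0,1)/2$, a rational constant independent of $\phi$. Taking $c'\coloneqq d(0,1)/2$ thus gives a polynomial-time many-one reduction from \threesat{} to normalized \DTGW{} with this fixed value of $c$, and containment in \NP{} is inherited directly from \Cref{thm:nphard}. The one step that needs genuine care is the claim that padding with neutral isolated vertices leaves the optimum unchanged, which is precisely where the metric property (triangle inequality) of $d$ is invoked; everything else is bookkeeping (and it also shows the argument is robust to the exact choice of normalizing factor, as long as it is $\Theta(m)$ on this family of instances).
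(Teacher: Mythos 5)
Your proof is correct and follows the same route as the paper, whose own justification is a single sentence invoking the reduction from \Cref{thm:nphard} and observing that normalization by the number of vertices makes the threshold constant. In fact your padding step (enlarging both vertex sets to exactly $84m$ with cost-neutral isolated vertices, with the dummy-to-dummy rewiring justified via the triangle inequality) supplies a detail the paper leaves implicit: without it the normalized threshold $42m\cdot d(0,1)/(2n+47m+8)$ still varies with the instance rather than being a single fixed constant, so your write-up is the more complete version of the same argument.
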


Due to the proven worst-case hardness of \DTGW, there is little hope to solve the problem efficiently in general. Nevertheless, in the next section we provide some methods to cope with this intractability.

\section{Algorithms}\label{sec:algorithms}
In this section we first point out polynomial-time solvable special cases. 
Then, we develop a mathematical programming formulation as well as a heuristic approach to approximate the $\dtgw$-distance in practice.

\subsection{Exact Polynomial-time Algorithms for Special Cases}
Our first algorithmic result is to show that one can determine in polynomial time
whether two temporal graphs with the same number of vertices have $\dtgw$-distance zero.
This basic case occurs when checking for duplicates within a data set.
In contrast, determining whether two (static) graphs have graph edit distance zero is not known to be polynomial-time solvable (as this is equivalent to the famous \textsc{Graph Isomorphism} problem).

\begin{theorem}\label{thm:c=0}
  Let~$\mathcal{G}=(V,E_1,\ldots,E_T)$ and~$\mathcal{H}=(W,F_1,\ldots,F_U)$ be two temporal graphs with $|V| = |W| =n$.
  For all vertex signatures and all metrics, deciding whether $\dtgw(\Gg,\Hh)=0$ holds is possible in $\bigO(n^2\cdot (T+U) + n^3)$ time.
\end{theorem}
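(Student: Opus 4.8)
The plan is to reduce this to a matching problem, exploiting that when $|V|=|W|=n$ every vertex mapping $M\in\mathcal{M}(V,W)$ is a bijection. Then $C(G_i,H_j,M)=\sum_{(u,v)\in M}\dist\!\big(f_{G_i}(u),f_{H_j}(v)\big)$ has no deletion terms and is a sum of nonnegative numbers (since $\dist$ is a metric), so $C(G_i,H_j,M)=0$ holds exactly when $f_{G_i}(u)=f_{H_j}(M(u))$ for all $u\in V$; call the layers $G_i$ and $H_j$ \emph{$M$-compatible} in that case. As every summand of $\sum_{(i,j)\in p}C(G_i,H_j,M)$ is nonnegative, $\dtgw(\Gg,\Hh)=0$ holds if and only if there exist a bijection $M$ and a warping path $p\in\mathcal{P}_{T,U}$ such that $G_i$ and $H_j$ are $M$-compatible for every $(i,j)\in p$.

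The next step is to eliminate the warping path by a ``collapsing'' argument. The key point is that whether $f_{G_i}=f_{G_{i+1}}$ (as functions $V\to\Q^k$) does not depend on $M$, and likewise for $\Hh$. So I would compute the \emph{collapsed} layer sequences: let $1=i_1<i_2<\dots<i_{T'}\le T$ be obtained by keeping layer~$1$ and, scanning left to right, keeping a layer precisely when its signature function differs from that of the previously kept layer, and define $1=j_1<\dots<j_{U'}\le U$ analogously for $\Hh$. Using the fact that dynamic time warping with the discrete $\{0,1\}$-valued cost is zero iff the two input sequences agree after deleting consecutive duplicates, I would then show: for a given bijection $M$, a warping path of all-zero cost exists if and only if $T'=U'$ and $G_{i_r}$ is $M$-compatible with $H_{j_r}$ for every $r\in[T']$. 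For the forward direction, order the pairs of $p$ along the path and collapse the sequence of visited layer signatures; since the first coordinates are nondecreasing and hit every index in $[T]$ this collapse equals $(f_{G_{i_r}})_{r=1}^{T'}$, and via compatibility it also equals the collapse of $(f_{H_j}\circ M)_j$, namely $(f_{H_{j_s}}\circ M)_{s=1}^{U'}$, forcing $T'=U'$ and entrywise compatibility. For the converse, build a monotone staircase path that runs through the constant ``blocks'' of the two sequences in parallel, taking a diagonal step between consecutive blocks.

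Finally, assuming $T'=U'$, I would encode the remaining existential statement as an \textsc{Assignment Problem}: define the combined signature $\Phi(u):=\big(f_{G_{i_1}}(u),\dots,f_{G_{i_{T'}}}(u)\big)\in\Q^{kT'}$ for $u\in V$ and $\Psi(w):=\big(f_{H_{j_1}}(w),\dots,f_{H_{j_{T'}}}(w)\big)$ for $w\in W$. A bijection $M$ making $G_{i_r}$ and $H_{j_r}$ compatible for all $r$ is exactly one with $\Phi(u)=\Psi(M(u))$ for all $u$, which exists iff the multisets $\{\,\Phi(u):u\in V\,\}$ and $\{\,\Psi(w):w\in W\,\}$ coincide; equivalently, the $0/1$ cost matrix $c(u,w):=1$ if $\Phi(u)\ne\Psi(w)$ (else $0$) admits a cost-$0$ assignment. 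For the running time: the collapsed sequences are obtained with $\bigO(n(T+U))$ signature comparisons, each of the $n^2$ matrix entries costs $\bigO(T')=\bigO(T+U)$, and the assignment is solved in $\bigO(n^3)$ time \cite[Theorem~12.2]{AMO93}, giving $\bigO(n^2(T+U)+n^3)$ overall. The main obstacle I anticipate is the second step: proving the collapse characterization cleanly, in particular making precise that the collapse pattern of the warped $\Hh$-signature sequence is $M$-independent and that a warping path forces the two collapsed sequences to coincide as \emph{ordered} sequences rather than merely as multisets.
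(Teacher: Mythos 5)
Your proposal is correct and follows essentially the same route as the paper's proof: both identify the blocks of consecutive layers with identical signature vectors, argue that any zero-cost warping path must align these blocks in order (forcing the numbers of blocks to coincide), and then reduce the remaining choice of vertex mapping to a single \textsc{Assignment Problem} within the stated time bound. The only cosmetic difference is that you collapse to one representative layer per block and use a $0/1$ cost matrix, whereas the paper writes down an explicit canonical warping path over all $T+U$ layers and invokes \Cref{obs:trivial}~ii); the two are equivalent in both correctness and running time.
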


\begin{proof}
  We will show that for distance zero, an optimal warping path can easily be determined.
  Polynomial-time solvability then follows from \cref{obs:trivial}.
  
  Let~$\mathcal{G}=(V,E_1,\ldots,E_T)$ and~$\mathcal{H}=(W,F_1,\ldots,F_U)$ be two temporal graphs with~$V=:\{v_1,\ldots,v_n\}$ and~$W=:\{w_1,\ldots,w_n\}$. 
  For each~$i\in[T]$, we define the \emph{$i$\textsuperscript{th} layer signature} of~$\mathcal{G}$ as~$f(G_i)\coloneqq \big(f_{G_i}(v_1),\ldots,f_{G_i}(v_n)\big)$  
  (analogously, $f(H_j)\coloneqq\big(f_{H_j}(w_1),\ldots,f_{H_j}(w_n)\big)$ for~$j\in[U]$).
  Assuming $\dtgw(\mathcal{G},\mathcal{H})=0$, it follows that there exists a vertex mapping~$M\subseteq V\times W$ and a warping path~$p\in\mathcal{P}_{T,U}$ such that
  $$\sum_{(u, v) \in M} \dist\big(f_{G_i}(u), f_{H_j}(v)\big)=0$$
  holds for every $(i,j)\in p$.
  Since~$\dist$ is a metric, this implies that~$f_{G_i}(u)=f_{H_j}(v)$ holds for every~$(u,v)\in M$.
  That is, $f(H_j)$ is a permutation (determined by~$M$) of~$f(G_i)$.
  Let $1\le i_1 < i_2 \ldots < i_q < T$ be the indices such that
  \[f(G_i) \neq f(G_{i+1}) \iff i \in \set{i_k ; k \in [q]} \]
  and let~$1\le j_1 < j_2 < \ldots < j_r < U$ be the indices such that
  \[ f(H_j) \neq f(H_{j+1}) \iff j \in \set{j_k; k \in [r]} \,.\]
  Clearly, if~$f(G_i)\neq f(G_{i'})$ and layer~$i$ is warped to layer~$j$ and layer~$i'$ is warped to layer~$j'$, then $f(H_j)\neq f(H_{j'})$ since otherwise the cost will not be zero.
  By the definition of a warping path, it follows that the layers~$1,\ldots,i_1$ of~$\mathcal{G}$ can only be warped to layers~$1,\ldots,j_1$ of~$\mathcal{H}$ and the layers~$i_1+1,\ldots,i_2$ of~$\mathcal{G}$ can only be warped to layers~$j_1+1,\ldots,j_2$ of~$\mathcal{H}$ and so on.
  Note that this is only possible if~$q=r$. If this is the case, then we can
  assume that the warping path~$p$ has the following form:
  \begin{align*}
    p = \bigl\{&(1,1), (1, 2), \ldots, (1,j_1), (2, j_1), \ldots(i_1,j_1),\\
               &(i_1+1,j_1+1),\ldots,(i_1+1,j_2),\ldots(i_2,j_2),\\ 
               & \ldots, \\
               &(i_q+1,j_q+1),\ldots,(i_q+1,U),\ldots,(T,U) \bigr\}.
  \end{align*}
  By \cref{obs:trivial}, we can now check whether there exists a vertex mapping that yields distance zero for the warping path~$p$
  in $\bigO(n^2\cdot (T+U) + n^3)$ time. 
  Computing~$p$ can be done in~$\bigO(n(T + U))$ time.
\end{proof}

We remark that if the vertex signatures and the metric satisfy the property that every pair of different vertex signatures has distance at least~$\delta$ for some constant~$\delta>0$, then \DTGW parameterized by the resulting cost~$c$ is in \XP{}.
For example, this is the case when the vertex signatures contain only integers and~$\dist$ is
any $\ell^p$-norm (for $p\ge 1$). Then, every pair of different signatures has distance at least~$\delta =1$.
The idea of the algorithm is to ``guess'' the tuples of a warping path which cause non-zero cost (at most $c / \delta$ many) and to check whether it is possible to complete the warping path without further costs.
The latter can be done in polynomial time using similar arguments as for the case~$c=0$ (\Cref{thm:c=0}).

\begin{corollary}
  \DTGW is in \XP with respect to~$c$ if the vertex signatures have a constant minimum pairwise distance~$\delta > 0$.
\end{corollary}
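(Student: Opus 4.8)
The plan is to prove that \DTGW parameterized by the target cost~$c$ lies in \XP{} whenever the vertex signatures come from a set in which any two distinct signatures are at distance at least~$\delta>0$. Fix an instance $(\Gg,\Hh,c)$ with $|V|\le|W|=:n$ and lifetimes $T,U$. If $\dtgw(\Gg,\Hh)\le c$, then in an optimal pair $(M,p)$ the number of layer-pairs $(i,j)\in p$ contributing nonzero cost is at most $c/\delta$, since every such pair contributes either a term $\dist(f_{G_i}(u),f_{H_j}(v))\ge\delta$ for some $(u,v)\in M$ or a deletion term (which we may assume, by rescaling or by the structure of the reduction, is also at least~$\delta$; in general one treats deletion costs separately but the bound on the count of ``bad'' pairs is still $\bigO(c/\delta)$). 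Call such pairs \emph{costly}. So the first step is: enumerate all subsets $P_{\mathrm{bad}}\subseteq[T]\times[U]$ of size at most $c/\delta$ that could be the set of costly pairs of some warping path. There are at most $\binom{TU}{c/\delta}=(TU)^{\bigO(c/\delta)}$ such subsets, which is the source of the \XP{} (rather than \FPT{}) bound.

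For each candidate set $P_{\mathrm{bad}}$ I would then check whether it is consistent with being the costly part of some warping path, and whether the remaining ``free'' part can be completed at zero cost. The key observation, exactly as in the proof of \Cref{thm:c=0}, is that once we insist that all layer-pairs outside $P_{\mathrm{bad}}$ have zero cost, the permuted-signature-equality constraint $f(H_j)=\pi(f(G_i))$ (for the single global permutation~$\pi$ induced by~$M$) forces the free portions of the warping path into essentially unique ``blocks'': between two consecutive costly pairs the layer signatures of~$\Gg$ and of~$\Hh$ must each be constant up to the permutation~$M$, so the warping path is pinned down on those stretches exactly as in the $c=0$ analysis. Concretely: delete the layers participating in $P_{\mathrm{bad}}$ (or rather, contract each maximal run of layers that must be warped together), apply the $c=0$ reasoning to each resulting segment to obtain a canonical candidate warping path $p$ that agrees with $P_{\mathrm{bad}}$, and then invoke \Cref{obs:trivial}~ii) to compute, in $\bigO(n^2|p|+n^3)$ time, an optimal vertex mapping $M$ for that fixed $p$. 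If the resulting cost is at most~$c$ for some choice of $P_{\mathrm{bad}}$, we accept; otherwise we reject. Correctness in one direction is immediate (any accepted instance exhibits a witnessing $(M,p)$); in the other direction, an optimal $(M,p)$ with cost $\le c$ has a costly set $P_{\mathrm{bad}}$ of size $\le c/\delta$ that will be among those enumerated, and the canonical path we build for that $P_{\mathrm{bad}}$ is at least as cheap as~$p$ because it differs from~$p$ only on zero-cost stretches.

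The total running time is $(TU)^{\bigO(c/\delta)}\cdot\bigO(n^2(T+U)+n^3)$, which is polynomial for every fixed~$c$ (and fixed~$\delta$), establishing membership in \XP{} with respect to~$c$. I would state explicitly that this covers the natural case of integer-valued vertex signatures with any $\ell^p$-norm metric ($p\ge1$), where $\delta=1$.

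The main obstacle I anticipate is the bookkeeping in the ``consistency check'' step: given an arbitrary candidate set $P_{\mathrm{bad}}$, one must argue carefully that the zero-cost requirement on the complement, together with the fact that~$M$ is a \emph{single} mapping used in every layer, determines the free part of the warping path up to irrelevant choices, and that one can detect in polynomial time whether any valid completion exists at all (the analogue of the condition ``$q=r$'' in \Cref{thm:c=0}, now applied segment by segment between consecutive costly pairs, with the added subtlety that a costly pair may sit in the interior of what would otherwise be a constant-signature block). Getting this case analysis clean — and making sure the deletion-cost terms are handled uniformly with the mapping-cost terms when bounding $|P_{\mathrm{bad}}|$ — is where the real work lies; the enumeration and the appeal to \Cref{obs:trivial} are routine.
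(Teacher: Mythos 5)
Your proposal matches the paper's own argument: the paper likewise guesses the at most $c/\delta$ warping-path tuples causing non-zero cost (an enumeration of size $(TU)^{\bigO(c/\delta)}$, hence \XP{} rather than \FPT{}) and completes the remaining zero-cost stretches using the same block-alignment reasoning as in the proof of \Cref{thm:c=0}, finishing with \Cref{obs:trivial}. The subtleties you flag (deletion costs, segment-by-segment consistency) are real but are also left at sketch level in the paper, so your write-up is at least as complete as the original.
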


In contrast, if the dtgw-distance is normalized, then the differences between vertex signatures can be arbitrarily small, in which case \DTGW is \NP-hard for constant~$c$ (\Cref{cor:constc}).

To overcome this hardness, in the following, we consider special cases based on parameters regarding the warping path length.
We assume that the lifetimes of the inputs differ by at most a constant, that is, $T = U + t$ for some $t \ge 0$ (which might often be the case in practice).
Note that, by definition, every warping path of order~$T\times U$ has length at least~$T$.
We define the parameter~$\lambda$ to be the difference between the warping path length and the lower bound~$T$, that is, we consider only order-$(T\times U)$ warping paths of length at most~$T+\lambda$ (in practice, overly long warping paths might be considered unnatural).
We prove that \DTGW is in \XP with respect to the combined parameter~$(\lambda,t)$.

\begin{theorem}\label{prop:XPwarplength}
  For all vertex signatures and all metrics, \DTGW is solvable in
  $$\bigO\left((T+\lambda)^\lambda\cdot T^{\lambda+t}\left(n^2\cdot(T+\lambda)+n^3\right)\right)$$
  time if $n = \max(|V|,|W|)$, $T=U+t$, and the warping paths have length at most~$T+\lambda$.
\end{theorem}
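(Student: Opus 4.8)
The plan is to reduce the computation to a bounded number of invocations of \Cref{obs:trivial}~ii). Restricted to warping paths of length at most $T+\lambda$, the quantity $\dtgw(\Gg,\Hh)$ equals $\min_{p}\min_{M}\sum_{(i,j)\in p}C(G_i,H_j,M)$, and \Cref{obs:trivial}~ii) computes, for any fixed $p$, an optimal mapping $M$ in $\bigO(n^2\abs{p}+n^3)$ time. Hence it suffices to enumerate \emph{all} order-$(T\times U)$ warping paths $p$ with $\abs{p}\le T+\lambda$, run \Cref{obs:trivial}~ii) on each, keep the smallest resulting cost, and compare it with $c$. Since the dtgw-distance is symmetric, we may assume $T=U+t$ with $t\ge0$; the whole difficulty is then in bounding and enumerating the relevant warping paths.

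First I would analyze the structure of a short warping path. Classify each of the $\abs{p}-1$ steps of $p$ as \emph{diagonal} ($(i,j)\mapsto(i+1,j+1)$), \emph{horizontal} ($(i,j)\mapsto(i+1,j)$), or \emph{vertical} ($(i,j)\mapsto(i,j+1)$), and let $a,c,b$ be their respective numbers. Counting coordinate increments gives $a+c=T-1$ and $a+b=U-1$, so $c=t+b$ and $\abs{p}-1=a+b+c=T-1+b$. The bound $\abs{p}\le T+\lambda$ therefore forces $b\le\lambda$ and $c=t+b\le t+\lambda$. Conversely, any arrangement of $a=U-1-b$ diagonal, $b$ vertical, and $c=t+b$ horizontal steps (for some $0\le b\le\lambda$), read off starting from $(1,1)$, is a valid order-$(T\times U)$ warping path of length $T+b\le T+\lambda$, since it ends at $(U+t,U)=(T,U)$ and is coordinatewise monotone with no repeated pair.

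Next, the enumeration. Such a path is uniquely specified by (i) the value $b\in\{0,\dots,\lambda\}$, (ii) the set of vertical positions among the $T-1+b$ steps, and (iii) the set of horizontal positions among the remaining $T-1$ steps (all other steps being diagonal). The number of choices is $\sum_{b=0}^{\lambda}\binom{T-1+b}{b}\binom{T-1}{t+b}=\bigO\!\left((T+\lambda)^{\lambda}\cdot T^{\lambda+t}\right)$; each path is generated in $\bigO(T+\lambda)$ time and fed to \Cref{obs:trivial}~ii) at cost $\bigO(n^2(T+\lambda)+n^3)$, and multiplying the two factors yields the claimed running time. Correctness is immediate: any admissible pair $(M,p)$ with $\abs{p}\le T+\lambda$ has $p$ among the enumerated paths, and for that $p$ the algorithm considers a mapping no worse than $M$; hence the minimum found equals $\dtgw(\Gg,\Hh)$ over all length-$\le(T+\lambda)$ warping paths.

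The main obstacle is the combinatorial bookkeeping in the two displayed identities: checking that $\abs{p}-1=T-1+b$, which ties the excess warping-path length exactly to the number $b$ of vertical steps, and that exactly $T-1$ step positions remain after the vertical ones are fixed — this is precisely what produces the factor $T^{\lambda+t}$ rather than a weaker $(T+\lambda)^{\lambda+t}$. Everything else is a straightforward combination of this enumeration with \Cref{obs:trivial}~ii).
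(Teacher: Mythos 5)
Your proposal is correct and follows essentially the same route as the paper's proof: classify the steps of a warping path as horizontal, vertical, or diagonal, observe that the number of vertical steps equals the excess length $\abs{p}-T$ (hence is at most $\lambda$), count the paths by the same product of binomials $\binom{T-1+b}{b}\binom{T-1}{t+b}$ summed over $b\le\lambda$, and apply \Cref{obs:trivial}~ii) to each enumerated path. Your additional verification that every such step arrangement indeed yields a valid warping path is a small completeness bonus over the paper's argument, but the approach is the same.
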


\begin{proof}
  Let~$\mathcal{G}=(V,E_1,\ldots,E_T)$ and~$\mathcal{H}=(W,F_1,\ldots,F_T)$ be two temporal graphs and let~$p=\set{p_1=(i_1,j_1),\ldots,p_L=(i_L,j_L)}\in\mathcal{P}_{T,U}$ be a warping path.
  The warping path~$p$ contains~$L-1$ \emph{steps} $p_{\ell+1}-p_\ell=(i_{\ell+1}-i_\ell,j_{\ell+1}-j_\ell)\in\{(1,0),(0,1),(1,1)\}$ for $1\le\ell<L$.
  We call a step~$\ell$ \emph{horizontal} if $p_{\ell+1}-p_\ell=(1,0)$, and we call it \emph{vertical} if~$p_{\ell+1}-p_\ell=(0,1)$, and otherwise we call it \emph{diagonal}.
  Let~$\nu\le U-1$ denote the number of vertical steps in~$p$.
  Then, $p$~contains also $\nu + t$ horizontal and $U-1-\nu$ diagonal steps, that is,
  $L-1 = \nu + \nu + t + U - \nu -1$, which implies that $\nu= L - t - U$.
  Clearly, there are $\binom{L-1}{\nu}$ possible positions for the vertical steps.
  For each of these possible choices, there are again $\binom{L-1-\nu}{\nu+t}$ possible positions for horizontal steps (the remaining steps are diagonal).
  Therefore, the overall number of warping paths of length at most~$T+\lambda$ is
  \[\sum_{l=0}^\lambda \binom{T+l-1}{l}\binom{T-1}{l+t} \in \bigO\left((T+\lambda)^\lambda\cdot T^{\lambda+t}\right).\]
  For each of these possible warping paths, we can compute $\dtgw(\Gg,\Hh)$ in~$\bigO(\max(|V|,|W|)^2\cdot(T+\lambda)+\max(|V|,|W|)^3)$ time by \cref{obs:trivial}.
\end{proof}

Note that \Cref{prop:XPwarplength} implies polynomial-time solvability of \DTGW if~$t$ and~$\lambda$ are constants.
For unbounded~$t$, however, we conjecture that \DTGW is \NP-hard even if the warping paths are restricted to have length~$\max(T,U)$, which is the minimum possible length (that is, $\lambda=0$).

\subsection{Quadratic Programming}\label{sec:qp}

We give a formalization of \DTGW as a quadratic minimization problem with linear constraints~(QP).
This is \NP-hard to solve in general but can be used to solve small instances exactly with state-of-the-art QP-solvers such as Gurobi\footnote{\url{www.gurobi.com}}.

Let $\Gg = (V,E_1,\ldots,E_T)$ and $\Hh=(W,F_1,\ldots,F_U)$ be two temporal graphs.
Denote the vertices in~$V$ by $u_1,\ldots,u_{|V|}$ and the vertices in~$W$ by $v_1,\ldots,v_{|W|}$.
To model ``vertex deletion'', we add two artificial vertices $u_{|V|+1}, v_{|W|+1}$.
We use the following variables:
\begin{compactitem}
  \item For every $(i,j)\in[|V|+1]\times[|W|+1]$, we have a \emph{vertex mapping variable}~$m_{i,j}\in\{0,1\}$,
  where $m_{i,j}=1$ if and only if vertex~$u_i$ is mapped to vertex~$v_j$.
  \item For every $(s,t)\in[T]\times[U]$, we have a \emph{warping variable}~$w_{s,t}\in\{0,1\}$,
    where $w_{s,t}=1$ if and only if~$G_s$ is warped to~$H_t$.
\end{compactitem}

Moreover, for every $(s,t,i,j)\in[T]\times[U]\times[|V|+1]\times[|W|+1]$, let
\[
d_{s,t,i,j} \coloneqq \begin{cases}
  \dist\big(f_{G_s}(v_i), f_{H_t}(w_j)\big), & i\in[|V|],\; j\in[|W|]\\
  \eps_{G_s}(v_i), & i\in[|V|],\; j=|W|+1\\
  \eps_{H_t}(w_j), & i=|V|+1,\; j\in[|W|]\\
  0, & i=|V|+1,\; j=|W|+1
\end{cases}
\]
denote the cost of matching vertex $i$ in layer $s$ to vertex $j$ in layer $t$.

Then, computing~$\dtgw(\Gg,\Hh)$ is the following quadratic\footnote{
It is also possible to convert our formulation into a linear problem by introducing further variables and constraints for replacing the product $w_{s,t}\cdot m_{i,j}$ in the objective function.
However, in our experiments we found that the quadratic formulation can be solved faster.
} minimization problem.
\begin{subequations}\label{eqn:ILP}
\begin{alignat}{9}
& \text{minimize} \quad && \mathrlap{\sum_{s\in[T]} \; \sum_{t\in[U]} \; \sum_{i\in[|V|+1]} \; \sum_{j\in[|W|+1]} d_{s,t,i,j} \cdot w_{s,t}\cdot m_{i,j} } \tag{\ref*{eqn:ILP}}  \\
&\text{subject to} \quad &&& 
\sum_{j\in[|W|+1]} m_{i,j} &= 1 &\forall i \in [|V|]\label{a} \\
&&&& \sum_{i\in[|V|+1]} m_{i,j} &= 1 &\forall j \in [|W|]\label{b}\\
&&&& w_{1,1} &= 1 \label{c}\\
&&&& w_{s,t} &\leq \mathrlap{w_{s+1,t+1} + w_{s,t+1} + w_{s+1,t}}  &\hspace{4cm}\forall (s,t) \in [T-1]\times[U-1]\label{d}\\
&&&& w_{T,t} &\leq w_{T,t+1} &\forall t \in [U-1]\label{e}\\
&&&& w_{s,U} &\leq w_{s+1,U} &\forall s \in[T-1]\label{f}
\end{alignat}
\end{subequations}

The constraints \ref{a} and \ref{b} ensure that the vertex mapping variables define a correct vertex mapping, that is, every vertex is mapped to exactly one other vertex (or is deleted).
Constraints \ref{c} to \ref{f} ensure that the warping variables define a valid warping path.
Here, the constraints~\ref{d} to \ref{f} imply that if the warping path contains a pair~$(s,t)$, then it also contains at least one of the pairs~$(s+1,t)$, $(s,t+1)$, or $(s+1,t+1)$ (since the objective is minimized, any solution will actually select only one of these pairs).

The number of variables is in $\bigO(|V|\cdot|W| + T \cdot U)$ and the number of constraints is in~$\bigO(|V|+|W|+T\cdot U)$.

\subsection{Heuristic Approaches}\label{sec:heuristic}
In this section, we present a heuristic to compute the dtgw-distance,
which typically yields good (not necessarily optimal) solutions in practice.

The approach is to simply start with an arbitrary initial vertex mapping (or warping path) and to compute an optimal warping path (vertex mapping) based on \cref{obs:trivial} in polynomial time.
This process is then repeated by alternating between optimal warping path and optimal vertex mapping computation until the solution converges to a local minimum (or some other criterion is reached).

Note that it is a convenient feature of our heuristic to be able to stop the process after any number of iterations
to obtain some approximate solution (a so-called \emph{anytime algorithm}).
It is further possible to incorporate prior knowledge, for example, by fixing the mapping for some vertices.
Note also that convergence is guaranteed since we decrease the objective in each alternation and the search space is finite.
We propose several initialization options.

\paragraph*{Initial Warping Path.}
A first idea for initialization is to choose a shortest warping path (that is, of length $\max(T,U)$).
Note that for $T \neq U$ several such paths exists.
Without further knowledge about the instances, choosing a path within the Sakoe-Chiba band of small width is a reasonable default.
This initialization is very simple and only requires $\bigO(T+U)$ time.

Another idea is to compute a warping path using $D(G_i,H_j)$ as a cost for warping layer~$i$ to layer~$j$.
This is of course an optimistic estimate since it allows to use a different vertex mapping for each pair of layers.
Then, a vertex mapping can be computed by \cref{obs:trivial}.
This initialization takes $\bigO(T \cdot U \cdot n^3)$ time where $n := \max(\abs{V}, \abs{W})$.

\paragraph*{Initial Vertex Mapping.} The idea is to compute a vertex mapping by solving an \textsc{Assignment Problem} instance for approximate costs.
Let~$\sigma(u,v)$ be some approximate cost for mapping vertex~$u\in V$ to~$v\in W$.
For example, one could use the following estimations
\begin{align*}
  \sigma^*(u,v) &\coloneqq \sum_{i\in[T]}\sum_{j\in[U]}\dist\big(f_{G_i}(u),f_{H_j}(v)\big),\\
  \sigma_{\text{opt}}(u,v) &\coloneqq \sum_{i\in[T]}\min_{j\in[U]}\dist\big(f_{G_i}(u),f_{H_j}(v)\big).
\end{align*}
The first option~$\sigma^*$ estimates the cost of mapping~$u$ to~$v$ over all possible warpings between any two layers (this is usually more than any warping path will incur).
The definition of~$\sigma_{\text{opt}}$ only considers for each layer of the first temporal graph the minimal cost over all layers of the other temporal graph (this estimate might be too low).
Both of these options require $\bigO(T \cdot U \cdot \abs{V} \cdot \abs{W})$ time.

Based on the estimated costs one computes a vertex mapping by solving an \textsc{Assignment Problem} instance
and then computes an optimal warping path for this vertex mapping based on~\cref{obs:trivial}.

\medskip

The running time of one iteration (that is, computing a vertex mapping and an optimal warping path) is $\bigO\left((T + U)\cdot n^2 + n^3 + T \cdot U \cdot n\right)$.
While the number of iterations might depend on the choice of initialization, in our experiments
the heuristic always converged after very few iterations.
Regarding the solution quality, while it is possible to construct adversarial examples where the heuristic performs poorly, our experiments in \Cref{sec:benchmark} indicate that it performs well in practice.

\section{Experiments}\label{sec:experiments}

We conducted several experiments\footnote{Source code available at \url{www.akt.tu-berlin.de/menue/software}.} to demonstrate the merit of our dtgw-distance in applications and to evaluate the performance of the alternating minimization heuristic~(\AM) we described in \Cref{sec:heuristic}.
For computations we used a 4.0~GHz i7-6700K processor (single-threaded).

\subsection{Data Sets}\label{sec:data}
We used three data sets from the SocioPatterns project~\cite{GB18}.
Each of these consists of two temporal social networks, both recorded simultaneously with the same individuals as vertices.
The first network is a face-to-face contact network whereas the second one is a co-presence network where edges represent spatial proximity.
All six networks have a temporal resolution of 20 seconds. For our experiments, only the first day of each network was used and vertices without any edges were discarded. 
The three different data sets were recorded at a primary school (“LyonSchool”, 237 vertices, 1700 layers), a scientific conference (“SFHH”, 403 vertices, 2300 layers) and a workplace (“InVS15”, 180 vertices, 2100 layers).

\subsection{Comparison of Heuristic and Exact Solutions}
\label{sec:benchmark}
\newcommand{\AMI}[1]{\textsc{AM}$_{#1}$\xspace}
\newcommand{\AMSS}{\AMI{\sigma^*}}
\newcommand{\AMSO}{\AMI{\sigma_{\text{opt}}}}
\newcommand{\AMSW}{\AMI{\text{swp}}}
\newcommand{\AMOW}{\AMI{\text{owp}}}

We compared the solutions of our \AM heuristic under different initialization schemes against the optimal solutions obtained from the QP formulation.

Due to long running times of the QP-solver, we were restricted to very small temporal networks. 
We randomly selected~10 children from class~1A of the primary school face-to-face network and extracted~225 consecutive layers (during a high contact period) which we split into 15 temporal subnetworks with~15 consecutive layers each.
We used vertex degrees as signatures (with absolute value metric) and computed all pairwise dtgw-distances between these 15 networks with the following algorithms:
\begin{compactitem}
\itemsep0em
\item QP: exact QP-solver (Gurobi 8.0.1),
\item \AMSS: \AM with $\sigma^*$ initialization,
\item \AMSO: \AM with $\sigma_{\text{opt}}$ initialization,
\item \AMSW: \AM with shortest warping path initialization,
\item \AMOW: \AM with optimistic warping path initialization.
\end{compactitem}

We implemented the \AM heuristic in Python, using a C++ implementation\footnote{This code is available as a Python module at \href{https://github.com/src-d/lapjv}{github.com/src-d/lapjv}.
} of the Jonker-Volgenant algorithm \cite{Jonker1987} to solve the \textsc{Assignment Problem}.

\Cref{fig:primary-school-heuristics_cdf} shows for each initialization variant the estimated cumulative distribution function (ecdf) of the error percentage
$
\varepsilon = 100\cdot ( d_{\AM} - d_{\text{QP}})/d_{\text{QP}},
$ 
where $d_{\AM}$ is the approximate dtgw-distance obtained by an \AM heuristic and $d_{\text{QP}}$ is the exact dtgw-distance obtained by the QP-solver.
A point $(\varepsilon, P)$ on an ecdf-curve of an \AM heuristic means that the error percentage of \AM is at most $\varepsilon$ with estimated probability $P$. 

All \AM variants found the correct solution for a majority of samples~($P_0 > 0.5$). 
The average error percentages are rather small and vary between $3.0$ by \AMOW and $5.5$ by \AMSO. 
The \AMOW heuristic performed best, having $P_0 \approx 0.71$ and maximum error percentage $\max \approx 36.4$. 
These findings indicate that for small instances the approximations of the four heuristics are close to the optimal solution on average but may fail considerably in some cases with up to a maximum error of~$63.6\%$.
We remark that based on our experimental experience the relative error becomes smaller on larger instances.
 
Regarding running times, the \AM heuristic took less than 0.01 seconds per instance, usually converging after at most three iterations (independently of the chosen initialization).
In comparison, the QP was slower by a factor of more than 10\,000, requiring 8 minutes on average (median 2 minutes) with some instances approaching 2 hours.

\begin{figure}
\begin{minipage}{0.5\textwidth}
\includegraphics[width=\textwidth]{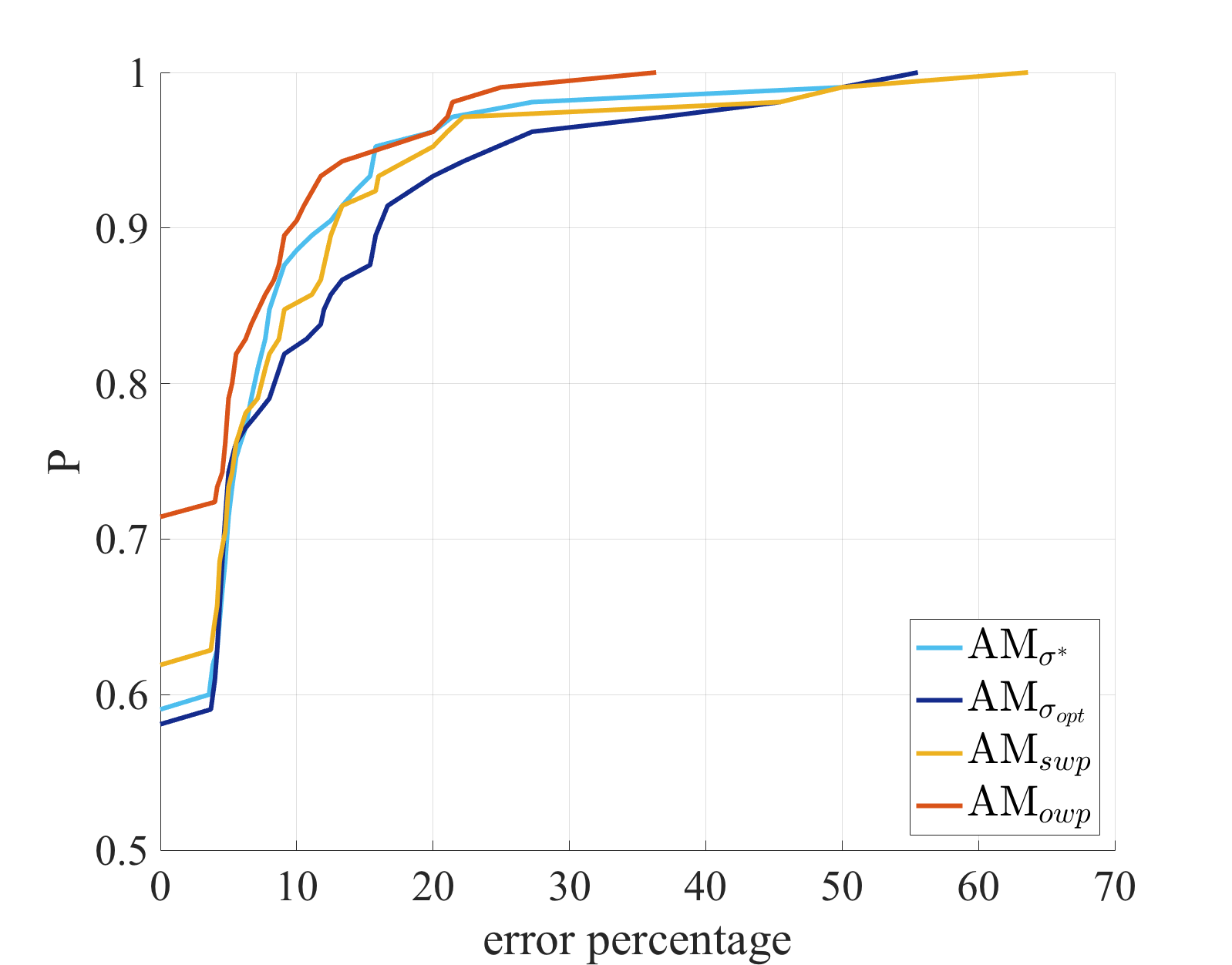}
\end{minipage}
\hfill
\begin{minipage}{0.45\textwidth}
\small
\begin{tabular}{lcccc}
\toprule
heuristic & avg & std & $P_0$ & max\\
\midrule
\AMSS & 4.5 &  9.1 & 0.59 & 63.6 \\
\AMSO & 5.5 & 10.5 & 0.58 & 55.6 \\
\AMSW & 4.8 & 10.0 & 0.61 & 63.6 \\
\AMOW & 3.0 &  6.2 & 0.71 & 36.4 \\
\bottomrule
\end{tabular}
\end{minipage}
\caption{The plot (left) shows the estimated cumulative distribution functions of the four \AM variants.
The table (right) presents the average error percentage~(avg), the standard deviation~(std), the fraction of optimally solved instances~($P_0$), and the maximum error percentage~(max) of every \AM variant.}
\label{fig:primary-school-heuristics_cdf}
\end{figure}

\subsection{Sensitivity of \DTGW to Noise}\label{sec:cluster}

The goal of this experiment was to assess how sensitive the dtgw-distance is to noise, that is,
how well can original data be reconstructed from noisy data.
We compared our dtgw-distance approach to the following two baseline methods.

\begin{itemize}

\item Non-consistent: Instead of using one consistent vertex mapping for all layers, one can allow a different mapping for each pair of layers.
Note that the resulting distance can be computed in $\bigO(T \cdot U \cdot n^3)$ time, thus being faster than an exact computation of the dtgw-distance but much slower than a single iteration of the \AM{} heuristic.

\item Non-temporal: A naive approach is to ignore the time information and solely compute an optimal vertex mapping between the underlying graphs.
This requires $\bigO(n^3)$ time plus the (usually linear) time to build the underlying graphs.
\end{itemize}

We used the primary school face-to-face network from which we extracted five reference temporal networks representing the contacts between children of the same grade,
each containing 45--50 vertices and 3100 layers.

For each of the five reference networks, we generated nine noisy copies as follows: 
\begin{enumerate}[(i)]
\item  For every~$i\in[T]$, $E_i$ is deleted with probability $p\in\{0.1,0.2,0.3\}$, and if not,
  then each edge~$e\in E_i$ is deleted with probability $p$.
\item  For every~$i\in[T]$, each edge~$e\in E_i$ was rewired with probability $p \in \{0.1, 0.2, 0.3\}$. 
\item  Each edge of the underlying graph was rewired with probability $p \in \{0.1, 0.2, 0.3\}$. 
\end{enumerate}

\noindent Rewiring an edge~$e=\{u,v\}\in E_i$ of a temporal graph is defined as
randomly picking a tuple $(e'=\{u',v'\},t)\in \bigcup_{s=1}^T(E_s \times\{s\})$
and then replacing $e$ in~$E_i$ by $\{u,v'\}$ and $e'$ in~$E_t$ by $\set{u',v}$.\footnote{
Since edges are undirected, it is understood that the choice of which vertex to call $u$ respectively $v$ is to be made randomly (the same holds for $u'$ and $v'$).}
Rewiring of underlying edges is done analogously (see \citet{HS12} for details).

We used the \AM heuristic to approximate the pairwise dtgw-distances (using degrees as vertex signatures) between all reference and noisy temporal networks.
In all of these instances, shortest warping path initialization (which is fastest) was used since preliminary tests showed that the other initializations produce very similar results.

\Cref{fig:primary-school-noise} shows the dendogram obtained by hierarchical clustering using complete linkage of the approximated pairwise dtgw-distances.
Both, the dtgw-distance and the non-consistent baseline were able to partition the instances into five clusters, each of which consists of a reference network and its nine noisy copies.
Hence, they successfully recovered the original reference networks from noise.
However, the clusters produced by the dtgw-distance are more compact than the ones of the non-consistent baseline.
In contrast, the non-temporal baseline was not able to separate the graphs of different grades.

In all instances, the heuristic converged within at most six iterations,
taking less than 15 seconds.
In comparison, the non-consistent baseline required 4 minutes on average for each instance,
while the non-temporal baseline was the fastest (below 1 second).

\begin{figure}[t]
\includegraphics[width=0.49\textwidth]{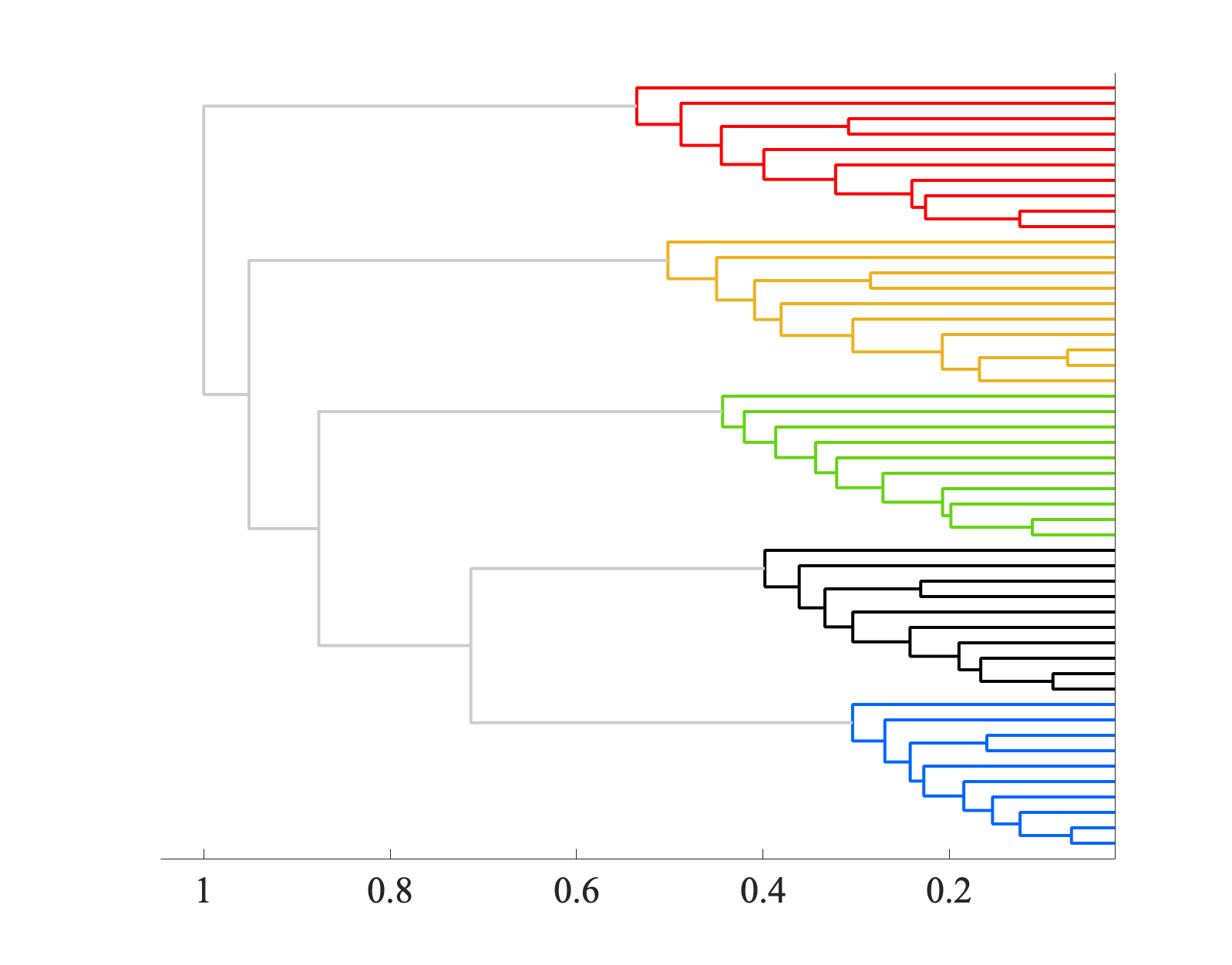}
\hfill
\includegraphics[width=0.49\textwidth]{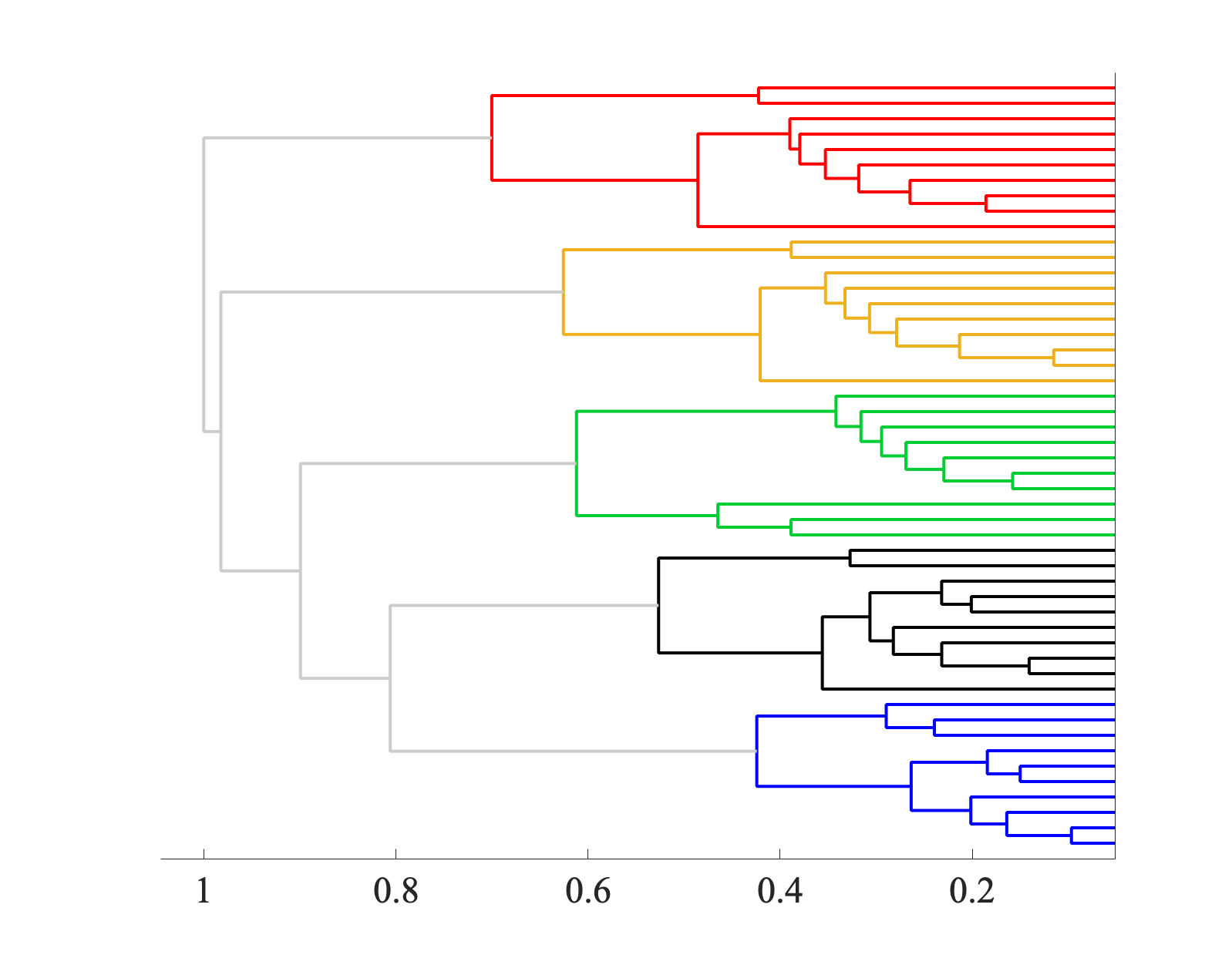}\\
\centering
\includegraphics[width=0.49\textwidth]{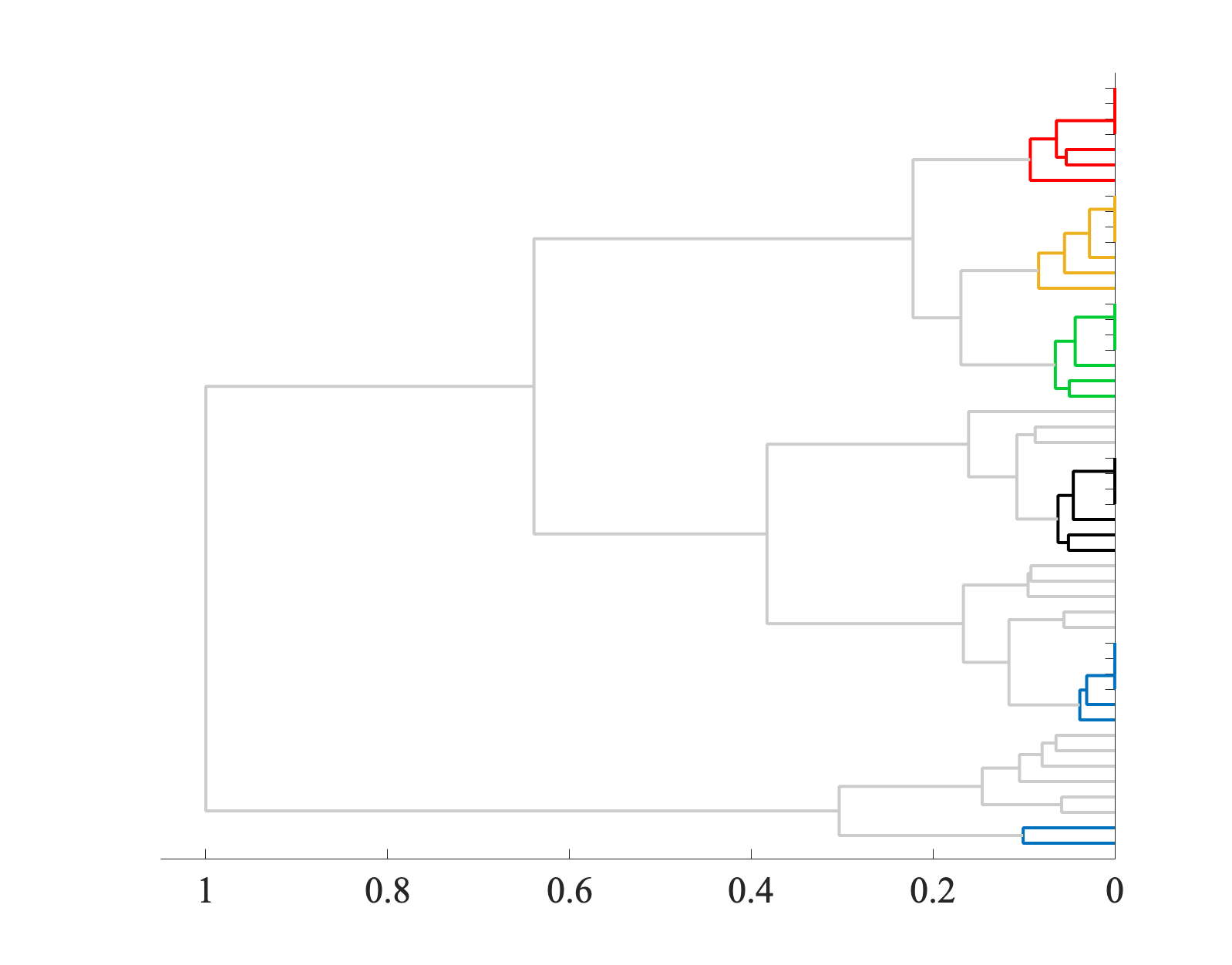}
\caption{Sensitivity of dtgw to noise. 
Shown are the dendograms obtained by agglomerative clustering using complete linkage.
Different colors represent different grades, the light gray edges connect elements of different grades.
The top left dendogram shows the result obtained by the dtgw-distance.
The top right dendogram shows the result obtained by the non-consistent benchmark.
The bottom dendogram shows the result obtained by using the non-temporal benchmark.
}
\label{fig:primary-school-noise}
\end{figure}

\subsection{De-Anonymization}
Besides measuring a distance between temporal graphs, the dtgw-distance additionally provides a mapping between the vertex sets which implicitly allows to identify vertices.
This allows the de-anonymization \cite{NS09} of temporal social networks.
Since the data sets used contain the original mapping between the vertext sets,
we can employ this as an easy benchmark for the accuracy of the dtgw-distance.

We used the \AM heuristic (with shortest warping path initialization) to compute the dtgw-distance (with degrees as vertex signatures\footnote{We also tested other signatures such as size of the connected component or betweenness centrality. However, the performance was (slightly) worse.}) on the three data sets mentioned in \Cref{sec:data}. We counted how many vertices were correctly re-identified (that is, mapped to their copies) in the resulting vertex mapping.
We compared our results to the following alternative algorithms found in the literature:
\newcommand{\dynamagna}{DynaMAGNA++}
\begin{itemize}
  \item \dynamagna{}~\cite{VCM17}: A search-based evolutionary algorithm computing a vertex mapping that maximizes edge conservation and node conservation over time.
  \item Temporal Network Embedding~\cite{ZLLGHW18}: Hawkes Process Based Temporal Network Embedding (HTNE) computes a low-dimensional embedding of the vertices of a temporal network. From this, we computed a vertex mapping minimizing the Euclidean distances between the vertex feature vectors.
  \item Fixed dtgw: Note that our dtgw-distance allows to fix the warping path beforehand (\Cref{obs:trivial}~ii)).
	Since in our case each pair of temporal graphs was recorded using synchronized clocks, it is natural to use a fixed warping path that aligns layer~$i$ of the first graph with layer~$i$ of the second graph.
\end{itemize}

To simulate a situation in which the temporal graphs represent processes which do not run synchronously in time, we created two modified versions of each of the data sets.
In the first one, called “shifted”, all events of the first graph were delayed by three minutes.
In the second version, called “randomized”, each layer of each of the graphs was randomly and independently replaced by $X$ layers where $X \in \{1, 2, \dots\}$ is a random variable with $\mathbb{P}(X\geq x) = x^{-3}$.
Since we pretend that the nature of these modifications is unknown to the tested algorithms, dtgw with fixed warping path is not applicable to these variants.

For \dynamagna{}, we used a population of size~15\,000 and a maximum of 10\,000 generations.
With HTNE, we computed 128-dimensional vertex embeddings using a batch of size 10\,000, a learning rate of 0.1, a history length of 2, and 5 negative samples.
Unlike dtgw, both methods utilized all four processor cores.

The results and running times are listed in \Cref{tab:vertex_identification}.
Most notably, the re-identification rate of HTNE was poor on all data sets, suggesting that these embeddings are ill-suited for comparing vertices taken from different networks.
Furthermore, all methods failed to re-identify any significant number of vertices on the primary school data set.
This might be explained by the fact that the co-presence network is very different from the face-to-face contacts due to a low spatial resolution (as was also noted by \citet{GB18}).

The overall performance was much better on the other two data sets, especially on the conference data where up to 90\% of participants could be re-identified 
whereas on the workplace data set the best result was 51\%.
Unsurprisingly, fixing the correct layer alignment on the unmodified graphs sped up the dtgw computation significantly while also yielding slightly better results.
On these instances, dtgw performed comparably to \dynamagna{}, being better in one case and worse in the other, although requiring much less computational effort.
In contrast, on the shifted and randomized data sets dtgw always achieved the best results (notably the re-identification performance using dtgw did not decrease on shifted data).

In all cases the \AM heuristic converged after at most six iterations and \dynamagna{} converged within 2\,000 generations.

\begin{table}
  \caption{Percentages (rounded) of vertices that were re-identified by the tested methods. Also average running times (in seconds) over the three versions of each data set are given (fixed dtgw is not applicable for shifted and randomized data sets).}
	\centering
	\renewcommand{\tabcolsep}{7pt}
	\begin{tabular}{ll rrrr}
	\toprule
	&	data set 				& dtgw 	& fixed dtgw & DynaMAGNA++	& HTNE \\
	\midrule
	\multirow{4}{*}{\rotatebox[origin=c]{90}{school}} 
		& original 	       & 2\% & 1\% & 1\% & 1\% \\
		& shifted	       & 1\% & -- & 0\% & 1\% \\
		& randomized	   & 1\% & --   & 1\% & 0\% \\
		& average running time  & 95\,s & 65\,s  & 15\,070\,s  & 250\,s \\
	\midrule
	\multirow{4}{*}{\rotatebox[origin=c]{90}{conference}} 
		& original         & 86\% & 90\% & 80\% & 0\% \\
		& shifted          & 86\% & --   & 27\% & 0\% \\
		& randomized           & 65\% & --   & 30\% & 1\% \\
		& average running time  & 200\,s  & 125\,s  & 20\,320\,s  & 90\,s \\
	\midrule
	\multirow{4}{*}{\rotatebox[origin=c]{90}{workplace}} 
		& original 	       & 38\%  & 43\%  & 51\% & 1\% \\
		& shifted 	       & 38\%  & --    & 19\% & 0\% \\
		& randomized       & 10\%  & --    &  8\% & 1\% \\
		& average running time  & 45\,s   & 20\,s   & 1\,600\,s & 50\,s \\
	\bottomrule
	\end{tabular}
	\label{tab:vertex_identification}
\end{table}

\section{Conclusion}\label{sec:conclusion}

We introduced a new proximity measure for comparing temporal graphs by transferring dynamic time warping from time series to temporal graphs.
This yields a challenging computational problem for which we proposed exact algorithms and a heuristic approach to solve it.
While exact solutions can only be computed for very small instances,
we empirically showed that our heuristic runs fast in practice and yields good approximations of optimal solutions.
In our experiments, it was also capable of de-anonymizing social networks.

Our work opens several directions for future research.
We believe that the dtgw-distance is a promising tool for example in biology and chemistry.
Processes like epidemic disease spreading or chemical reactions can naturally be viewed as temporal graphs where the vertices represent individuals or (macro) molecules 
(unfortunately we could not test this, as there is still a lack of openly available temporal molecular data \cite{VCM17}).
Since the exact time scales of these processes often vary, the ability of dynamic time warping to compensate for such differences would be especially helpful in this context.
One might also use the dtgw-distance to understand the learning process of neural networks.
The training phases of neural networks yield temporal networks which can be analyzed to gain insight into how different conditions influence the learning process.
Another potential application is analyzing team sports data via temporal graphs to reveal similar strategies
or roles of individual players.
Depending on the application domain, there is a wide range of possibilities to test the performance when using different vertex signatures or even other graph distances.

Besides experimenting with various application domains and further definition variants, already the proven computational worst-case hardness of \DTGW may trigger further algorithmic research.
A concrete open question is whether \DTGW is in \XP (or even fixed-parameter tractable) with respect to  $\lambda$, when the warping path length is restricted to be at most $\max(T,U)+\lambda$ where $T$, $U$ are the respective lifetimes.
It is also interesting to further study the influence of graph-specific parameters in the spirit of a multivariate complexity analysis \cite{Nie10,FJR13}.

\bibliography{bib}

\end{document}